\theoremstyle{plain}
\newtheorem{theorem}{Theorem}[section]
\newtheorem{lemma}[theorem]{Lemma}
\theoremstyle{definition}
\theoremstyle{remark}
\newcommand{\francesco}[1]{}{}
\newcommand{\vincent}[1]{}{}
\icmltitlerunning{Learning Useful  Representations of Recurrent Neural Network Weight Matrices}
\begin{document}

\twocolumn[
\icmltitle{Learning Useful  Representations of \\ Recurrent Neural Network Weight Matrices}



\icmlsetsymbol{equal}{*}

\begin{icmlauthorlist}
\icmlauthor{Vincent Herrmann}{idsia}
\icmlauthor{Francesco Faccio}{idsia,kaust}
\icmlauthor{Jürgen Schmidhuber}{idsia,kaust}
\end{icmlauthorlist}

\icmlaffiliation{idsia}{The Swiss AI Lab IDSIA, USI \& SUPSI}
\icmlaffiliation{kaust}{AI Initiative, KAUST}

\icmlcorrespondingauthor{Vincent Herrmann}{vincent.herrmann@idsia.ch}

\icmlkeywords{Machine Learning, ICML}

\vskip 0.3in
]



\printAffiliationsAndNotice{}  

\begin{abstract}
Recurrent Neural Networks (RNNs) are general-purpose parallel-sequential computers. 
The program of an RNN is its weight matrix. 
How to learn useful representations of RNN weights that facilitate RNN analysis as well as downstream tasks? 
While the \textit{mechanistic approach} directly looks at some RNN's weights to predict its behavior, the \textit{functionalist approach} analyzes its overall functionality---specifically, its input-output mapping. 
We consider several mechanistic approaches for RNN weights and adapt the permutation equivariant Deep Weight Space layer for RNNs. 
Our two novel functionalist approaches extract information from RNN weights by `interrogating' the RNN through probing inputs. 
We develop a theoretical framework that demonstrates conditions under which the functionalist approach can generate rich representations that help determine RNN behavior. 
We release the first two `model zoo' datasets for RNN weight representation learning. 
One consists of generative models of a class of formal languages, and the other one of classifiers of sequentially processed MNIST digits.
With the help of an emulation-based self-supervised learning technique we compare and evaluate the different RNN weight encoding techniques on multiple downstream applications.
On the most challenging one, namely predicting which exact task the RNN was trained on, functionalist approaches show clear superiority.
\end{abstract}

\section{Introduction}
\label{introduction}
For decades, researchers have developed techniques for learning representations of complex objects such as images, text, audio and video with deep neural networks (NNs). 
This expertise has significantly advanced the field by enabling models to convert data into formats useful for solving problems. 
In particular, recurrent NNs (RNNs) have been widely adopted due to their computational universality~\cite{siegelmann91turing}.
Low-dimensional representations of the programs of RNNs (their weight matrices) are of great interest as they can speed up the search for solutions to given problems. 
For instance, compressed representations of RNN weight matrices have been used to evolve  RNN parameters~\cite{koutnik:gecco10} for controlling a car from raw video input~\cite{koutnik2013evolving},
using Fourier-type transforms, e.g., the coefficient of the Discrete Cosine Transform (DCT)~\cite{Srivastava:2012:GCN:2330784.2330902}, without using the capabilities of NNs to learn such representations.
Recent work has seen a rise of representation learning techniques for NN weights using NNs as encoders~\cite{eilertsen2020classifying, Unterthiner2020PredictingNN, schurholt2021self, dupont2022data, faccio2022general}. However, there is a lack of methods for learning representations of RNNs. 

This paper introduces novel techniques for learning RNN representations using powerful NNs, which may be RNNs themselves. Just like representation learning in other fields, such as computer vision, facilitates solutions of specific tasks, such techniques can facilitate learning, searching, and planning with RNNs.
We show that by employing general RNN weight encoder architectures and self-supervised learning methods, it is possible to learn representations that capture diverse functionalities of RNNs. We differentiate between encoders that treat the weights as input data (mechanistic) and those that engage only with the function defined by the weights (functionalist). Within the functionalist approach, the \textit{non-interactive probing} method uses learnable but fixed probing sequences as input to the RNN and observes the corresponding outputs. In contrast, \textit{interactive probing} adapts the probing sequences dynamically based on the input RNN in order to extract the most relevant information. We provide empirical and theoretical evidence of the effectiveness of interactive probing for complex tasks, despite occasional training stability issues. For simpler tasks or when interactive properties are not required, non-interactive probing or mechanistic encoders might be more suitable.

\textbf{Our contributions are summarized as follows:}
\vspace{-10pt}
\begin{enumerate}[label=(\arabic*), itemsep=0pt]
    \item We introduce the challenge of learning useful representations of RNN weights and propose six neural network architectures for processing these weights. We define the difference between mechanistic and functionalist approaches, adapt Deep Weight Space Nets (DWSNets, \citet{navon2023equivariant, zhou2023permutation}) to RNNs, and introduce novel probing architectures, including the concept of interactive probing.
    \vspace{-4pt}
    \item We develop a theoretical framework for analyzing the efficiency of interactive and non-interactive probing encoders.
We prove that interactive probing encoders can be exponentially more efficient for certain problems.
\vspace{-4pt}
    \item We create and release two comprehensive RNN ``model zoo'' datasets. Each dataset consists of the weights of thousands of LSTMs~\cite{Hochreiter:97lstm}, trained on hundreds of different but related tasks. 
One dataset focuses on formal languages, while the other on tiled sequential MNIST.
\footnote{\url{https://github.com/vincentherrmann/rnn-weights-representation-learning}}
\vspace{-4pt}
    \item We conduct empirical analyses and comparisons across the different encoder architectures using these datasets, showing which encoders are more effective.
\end{enumerate}

\section{Related Work}

The concept of learning representations for the weights of feedforward NNs, also sometimes called hyper-representations, has been explored in studies by \citet{eilertsen2020classifying} and \citet{schurholt2021self}.
Various methods for processing NN weights have been proposed.
\citet{schurholt2021self, eilertsen2020classifying, faccio2020parameter, herrmann2022learning} suggest flattening the weights and using them as input data for simple encoders or predictors.
\citet{Unterthiner2020PredictingNN} and \citet{tang2022inputting} use permutation-invariant layers to extract high-level weight features.
\citet{navon2023equivariant} and \citet{zhou2023permutation} develop weight processing layers that are equivariant to neuron permutation, leading to the creation of DWSNet architectures, which can universally approximate functions of the weight space.
Other approaches include probing NNs with learnable inputs and analyzing the network based on the generated outputs, as proposed by
\citet{learningtothink2015, harb2020policy} and \citet{faccio2022goal, faccio2022general}.
For processing implicit neural representations, \citet{dupont2022data} employ normalizing flows and diffusion models, while \citet{xu2022signal} use higher-order spatial derivatives. All mentioned works, except for \citet{learningtothink2015} and \citet{herrmann2022learning}, focus solely on the processing of feedforward (including convolutional) NNs.

Emulation as an objective for learning representations was proposed by \citet{raileanu2020fast}, but their focus was on policy trajectories rather than on NN weights. For self-supervised representation learning of NN weights, reconstruction-based approaches have been explored \cite{schurholt2021self, dupont2022data}.
\citet{ramesh2021model} employ populations of models trained on diverse tasks for continual learning. Both \citet{eilertsen2020classifying} and \citet{schurholt2022model} have released datasets of trained convolutional NNs. To our knowledge, there are no public datasets of diverse trained RNNs.

\section{RNN Weight Encoders}
\label{sec:rnn_weight_encoders}

\begin{figure*}[h]
\centering
\includegraphics[width=1.0\textwidth]{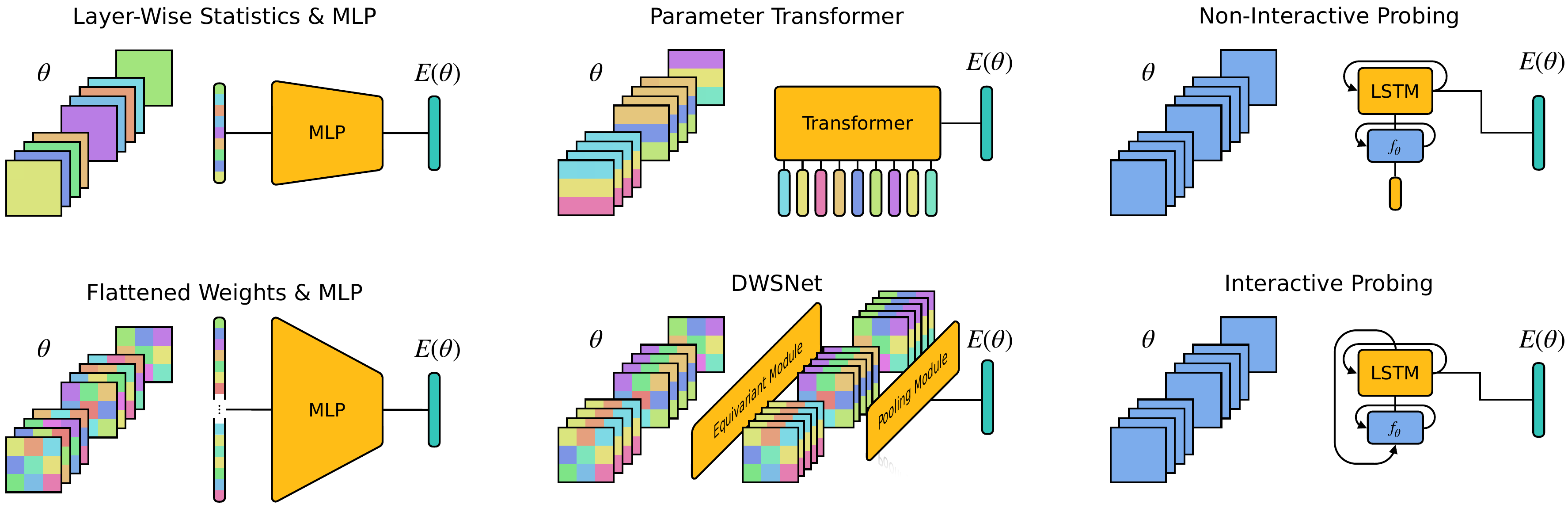}
\caption{RNN weight encoder architectures taking weights $\theta$ as input and producing a representation $E(\theta)$. The two groups of four weight matrices symbolize the four gates of two LSTM layers. The last matrix represents the output projection.}
\label{fig:encoder_architectures}
\end{figure*}

When using NN weights as input for another network, two main challenges arise.
First, the number of weights can quickly become very large.
Second, the weight space exhibits symmetries, particularly with respect to the permutation of hidden neurons \cite{hecht1990algebraic}. 
Rearranging the order of hidden neurons in a network does not change the computation it performs\footnote{With piece-wise linear activation functions like ReLU, there is also invariance to certain types of weight scaling. However, this is not a focus here as RNNs usually use different activation functions. There is another symmetry with respect to sign flips.}. An effective NN weight encoder should recognize these symmetries.

We consider an RNN, $f_\theta: \mathbb{R}^X \times \mathbb{R}^H \to \mathbb{R}^Y  \times \mathbb{R}^H; (x, h_{t-1}) \mapsto (y, h_t)$, $t \in 1, 2, \dots$, parametrized by $\theta \in \Theta$, which maps an input $x$ and hidden state $h_{t-1}$ to an output $y$ and a new hidden state $h_t$.
In the following, we assume that all RNNs are multi-layer LSTMs\footnote{Generalizing our framework to other RNN architectures should be straightforward.}.
An RNN weight encoder is a function $E_{\phi}: \Theta \to \mathbb{R}^Z; \theta \mapsto z$, mapping RNN weights $\theta$ to $Z$-dimensional representation vectors $z = E_\phi(\theta)$.
The encoder's parameters are $\phi \in \Phi$.

We differentiate between two approaches for encoding RNN weights:
(1) \textit{Mechanistic} encoders ``look'' at the weights $\theta$ directly, treating them as typical input data. 
(2) \textit{Functionalist} encoders, instead, interact with the function $f_\theta$ without direct access to the weights themselves. 
These encoders still map RNN weights to representations, but focus on a functional interpretation of the RNN.
We discuss and compare six encoder architectures for representing RNN weights, as depicted in Figure~\ref{fig:encoder_architectures}. 

\paragraph{Layer-Wise Statistics}
This approach, successfully used in previous studies \cite{Unterthiner2020PredictingNN} to predict properties of CNNs, involves creating, for each weight matrix, a vector consisting of mean, standard deviations, and five quantiles ($0, 0.25, 0.5, 0.75, 1$).
For LSTMs, each layer yields twelve distinct vectors, for each of the four gates corresponding to the input-to-hidden weights, hidden-to-hidden weights, and the bias vector.
These vectors are then concatenated and given to a multi-layer perceptron (MLP). The architecture is inherently invariant to permutations of hidden neurons. It efficiently scales with RNN size due to its reliance on high-level features.
However, this invariance extends to many transformations beyond those that preserve the RNN functionality. Consequently, RNNs with identical layer-wise statistics may behave differently, preventing the encoder's ability to approximate all functions of the weight space.

\paragraph{Flattened Weights}
In this approach, all RNN weights are flattened into a single vector before being fed into an MLP. Unlike the previous architecture, this one is not invariant to hidden neuron permutations. As a result, it faces a different challenge: numerous RNNs might appear different to the encoder yet perform identical computations.
This impedes generalization, as the MLP has difficulty learning these symmetries (empirically demonstrated in Appendix~\ref{app:permutation_invariance}).
Another issue is the very large size of the input vector.
Let $N$ represent the number of hidden neurons in $f_\theta$.
The number of parameters in the input layer of the MLP is proportional to $N^2$.
However, with an adequately large MLP, this architecture can approximate any function in the weight space\footnote{When mentioning universality of weight space functions $\Theta \to \mathbb{R}^Z$, we imply the regularity conditions of \citet{navon2023equivariant}, Proposition 6.2.}.
This follows immediately from the universal approximation property of MLPs \cite{hornik1991approximation}.

\paragraph{Parameter Transformer}
\citet{schurholt2021self} introduced an attention-based architecture. This design treats the weights of individual neurons (specifically, the rows of weight matrices along with their corresponding bias values) as a sequence. These sequences are then processed by an encoder-only transformer model \cite{vaswani2017attention}. 
A learned positional encoding ensures that the transformer has the information which weights correspond to which neuron. 
This also makes it not invariant to neuron permutation. 
The attention mechanism within the transformer enables associative retrieval of information from other neurons, which could be a beneficial inductive bias when handling NN weights.
The size of both the neuron sequence and the input transformation parameters scale linearly with $N$.
Given that transformers are known to be universal sequence-to-sequence function approximations \cite{yun2019transformers}, the parameter transformer can theoretically approximate any function of the weight space.

\paragraph{DWSNet}
\label{sec:DWSNet}
Both \citet{navon2023equivariant} and \citet{zhou2023permutation}  proposed architectures for processing the weights of feedforward NNs, closely related in design. These architectures are invariant precisely to permutations of hidden neurons and are capable of universally approximating functions of the weight space. We refer to this architecture as DWSNet, following \citet{navon2023equivariant}, and extend its application to LSTM networks.
The central concept of DWSNet is to construct layers that are equivariant to the hidden neuron permutation group. These layers process the weights, where each weight is represented by a feature vector.
A final pooling layer across all weights ensures invariance to neuron permutation. The mechanisms for adapting DWSNets to LSTMs, along with arguments for their universality, are presented in Appendix~\ref{app:dwsnet}.
Appendix~\ref{app:permutation_invariance} validates the implementation by demonstrating that DWSNets maintain invariance only to correctly permuted weights in LSTMs.

\paragraph{Non-Interactive Probing}

\begin{figure}[t]
\centering
\includegraphics[width=0.9\linewidth]{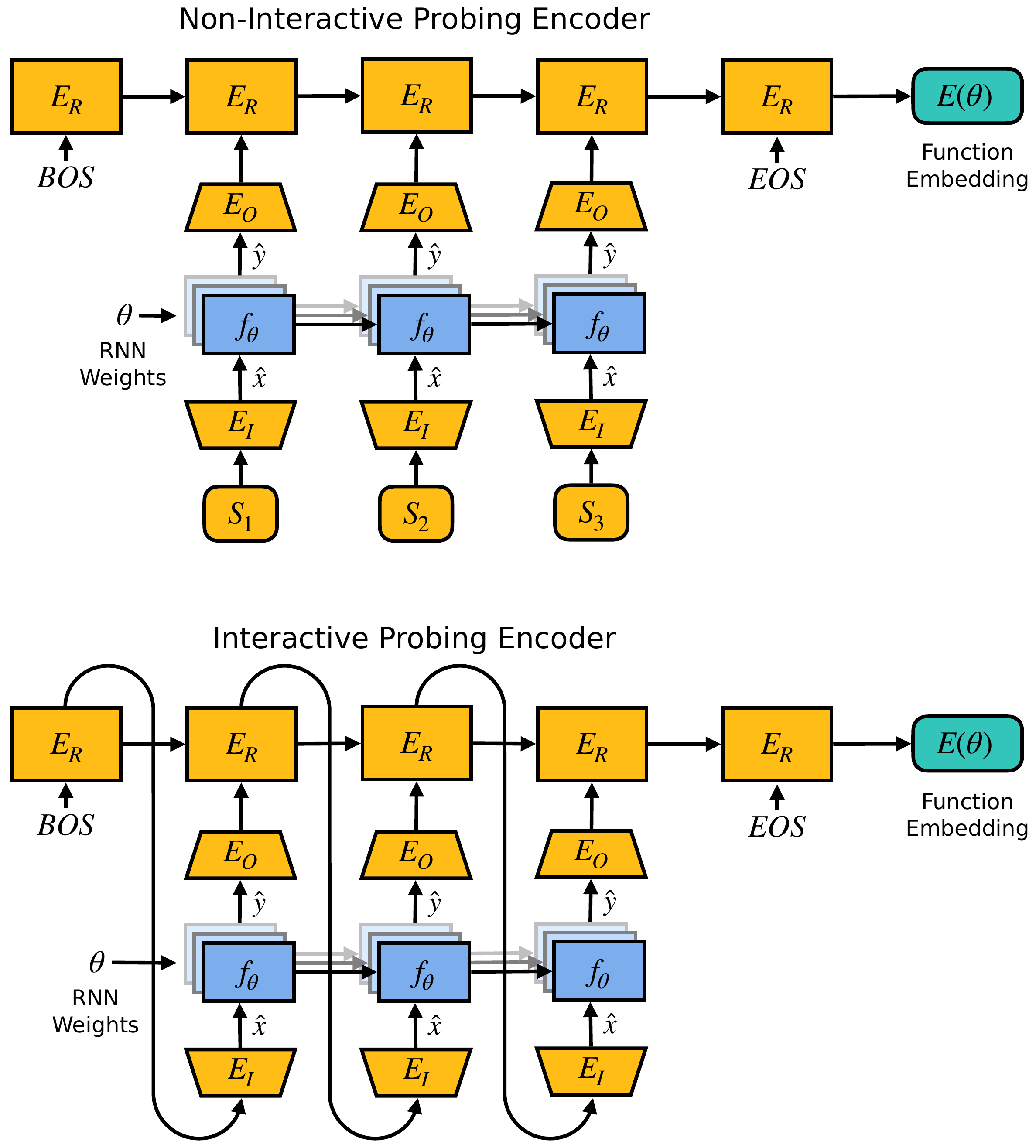}
\caption{Comparison of non-interactive and interactive procing encoders.}
\label{fig:probing_encoders}
\end{figure}

In the context of Reinforcement Learning and Markov Decision Processes, policy fingerprinting has emerged as an effective method to evaluate feedforward NN policies~\cite{harb2020policy, faccio2022goal, faccio2022general}. 
In policy fingerprinting, a set of learnable probing inputs is given to the network. 
Based on the set of corresponding policy outputs, a function (policy) representation is produced. 
This functionalist approach, which we refer to as non-interactive probing in this paper, is easily adaptable to RNNs by employing sequences of probing inputs (see Figure~\ref{fig:probing_encoders}, top). 

The probing sequence length, denoted as $L$, is a fixed hyperparameter.
A non-interactive probing encoder uses a sequence of learnable embeddings, $S_i, 1 \leq i \leq L$.
At each probing step $i$, the embedding $S_i$ is transformed by $E_I$, an MLP followed by a reshaping operation that outputs a set of $M$ probing inputs $\hat{x}_{im} \in \mathbb{R}^X \; \forall m \in \{1, \dots, M\}$.
$M$ represents the number of parallel probing sequences. 
These inputs are processed as a batch by the RNN $f_\theta$, resulting in a batch of probing outputs $\hat{y}_{im} = f_\theta(\hat{x}_{im}) \in \mathbb{R}^Y$
\footnote{For $f_\theta$, the recurrence of the hidden states over sequence steps is implied.}.
The probing outputs are concatenated and further transformed by another MLP $E_O$, resulting in a vector $o_i$.
The sequence of vectors $o_i$ is processed by an LSTM network $E_R$, which then outputs the RNN representation $E(\theta)$.

\paragraph{Interactive Probing}
The probing sequences for non-interactive probing are static, i.e., at test time, the probing sequences do not depend on the specific RNN being evaluated. 
The alternative is to make the probing sequences dynamically dependent on the given RNN.
Each new probing input, $\hat{x}_i$, should depend on the previous probing outputs $\hat{y}_{<i}$. 
This dynamic adaptation is achieved by feeding the output of the previous step's LSTM $E_R(o_{<i})$ into $E_I$ (Figure~\ref{fig:probing_encoders}, bottom).
A similar concept has been proposed by \citet{learningtothink2015} for extracting algorithmic information from recurrent world models.

Both types of probing encoders retain the invariance properties of $f_\theta$.
However, functionalist encoders have limitations in differentiating between weight space functions; they cannot discern mechanistic differences in functionally equivalent RNNs. Consequently, two RNNs performing exactly the same function will look identical to a probing encoder, even if they use different algorithms to compute the function.
This means probing encoders are not universal in the sense that some of the mechanistic approaches are.
Table~\ref{tab:encoders_theoretical_properties} summarizes key properties of the different encoder architectures.

\begin{figure*}[t]
    \centering
    \begin{minipage}[b]{0.67\linewidth}
        \centering
        \captionof{table}{Properties of the different RNN weight encoder architectures. $N$ is the number of hidden neurons in $f_\theta$.}
        \label{tab:encoders_theoretical_properties}
        \begin{center}
        \begin{small}
        \begin{tabular}{lcccc}
        \toprule
        \textbf{Encoder} & \multicolumn{1}{c}{\begin{tabular}[c]{@{}c@{}}\textbf{Permutation}\\ \textbf{Invariant}\end{tabular}} & \multicolumn{1}{c}{\begin{tabular}[c]{@{}c@{}}\textbf{Universal}\\ \textbf{Approx.}\end{tabular}} & \textbf{\#Params} & \textbf{Type} \\
        \midrule
        Layerwise Statistics & Yes & No & const. & Mechanistic \\ 
        Flattened Weights & No & Yes & $ O(N^2) $ & Mechanistic \\ 
        Parameter Transformer & No & Yes & $ O(N) $ & Mechanistic \\ 
        DWSNet & Yes & Yes & const. & Mechanistic \\ 
        Non-Interactive Probing & Yes & No & const. & Functionalist \\ 
        Interactive Probing & Yes & No & const. & Functionalist \\
        \bottomrule
        \end{tabular}
        \end{small}
        \end{center}
        \vskip -0.1cm
    \end{minipage}\hfill
    \begin{minipage}[b]{0.307\linewidth}
        \centering
        \includegraphics[width=0.7\linewidth]{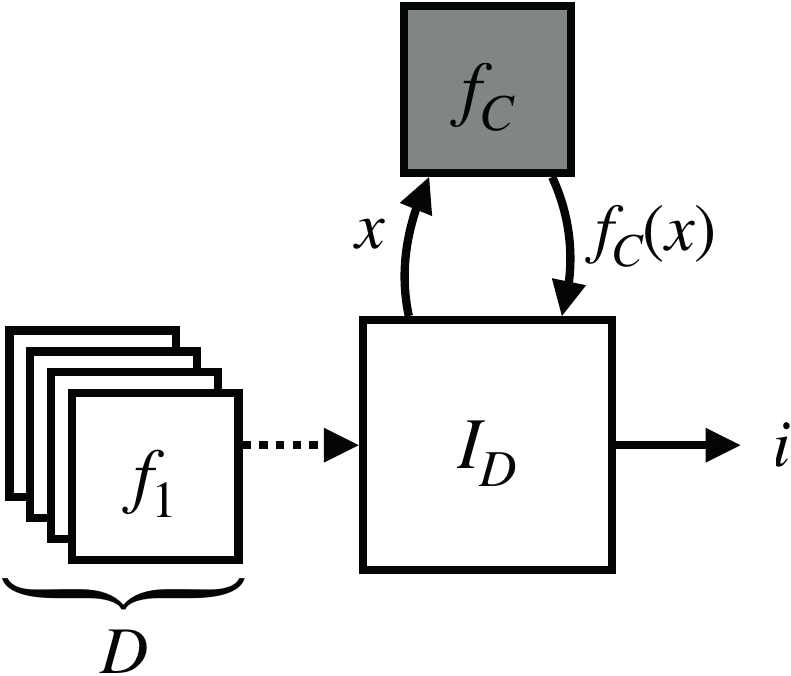}
        \vspace{-0.2cm}
        \captionof{figure}{Interrogator $I_D$ has access to a set $D$ of functions and interacts with function $f_C$, which it has to identify.}
        \label{fig:interrogator}
    \end{minipage}
\end{figure*}

\subsection{Theoretical Aspects of the Functionalist Approach}

We introduce a theoretical framework for analyzing probing encoders and the distinctions between interactive and non-interactive settings.
In practice, any RNN weight encoder is trained on a finite number of distinct RNNs.
For each of them, it should output a unique representation.
In this section, we study the related task of uniquely identifying a function from a given set by interacting with it.
Rather than using RNNs, we examine total computable functions, which are functions that halt and produce an output for every input. 
This is a minor limitation, since in practical scenarios, RNNs are almost always given a finite runtime.

Let $D$ represent a set of $n$ total computable functions $\{f_i : \mathbb{N} \to \mathbb{N} | i = 1, 2, \ldots, n\}$. 
In other words, $D$ comprises $n$ Turing machines that halt on every input, with no pair being functionally equivalent.
Let $I_D$ denote another Turing machine, which we call the \textit{Interrogator}. 
$I_D$ has access to the function set $D$ (e.g., the corresponding Turing numbers might be written somewhere on its tape). 
Moreover, $I_D$ is given access to one function $f_C \in D$ as a black box. 
$I_D$ can interact with $f_C$ by providing an input $x \in \mathbb{N}$ and subsequently reading the corresponding output $f_C(x)$. 
The task of $I_D$ is to identify which member of $D$ corresponds to function $f_C$, while minimizing interactions with $f_C$.
Specifically, $I_D$ must return $i \in \{1, \dots, n\}$ such that $f_C = f_i$.
This setup is depicted in Figure~\ref{fig:interrogator}.
It should be mentioned that RNNs of finite size and precision are \textit{not} universal computers~\cite{merrill2019sequential, deletang2022neural}.
However, the following propositions depend mainly on the relative computational ability of $I_D$ and the functions in $D$.
Hence we choose this simple abstract framework of distinguishing total computable functions and believe that it transfers well into realistic RNN settings.
The proofs of the following propositions can be found in Appendix~\ref{app:proofs}.

\begin{restatable}{proposition}{ab}
\label{th:interaction_limit}
Any function $f_C$ from a set $D$ can be identified by an interrogator through at most $|D| - 1$ interactions.
\end{restatable}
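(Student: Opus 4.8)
The plan is to prove the bound by a candidate-elimination strategy. Throughout the interaction the interrogator maintains a set $V \subseteq D$ of \emph{viable candidates}, namely those functions in $D$ consistent with all the outputs observed so far; initially $V = D$. The central observation is that the true function $f_C$ is consistent with every output it produces, so $f_C \in V$ is an invariant that holds at every step. The goal is to show that each interaction can be chosen so as to remove at least one element from $V$, so that after at most $|D| - 1$ queries we reach $|V| = 1$, and the unique remaining candidate must be $f_C$.

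The key step is to specify how to pick each query. As long as $|V| \geq 2$, the interrogator selects any two distinct candidates $f_i, f_j \in V$. Since $D$ contains no pair of functionally equivalent functions, $f_i$ and $f_j$ differ on some input. Because both are total and computable and the interrogator knows their Turing numbers, it can simulate them on $x = 0, 1, 2, \dots$ in turn and is guaranteed to find, in finite time, a distinguishing input $x^\ast$ with $f_i(x^\ast) \neq f_j(x^\ast)$. The interrogator then queries $f_C(x^\ast)$. Since $f_i(x^\ast) \neq f_j(x^\ast)$, the observed value can coincide with at most one of the two, so at least one of $f_i, f_j$ is inconsistent with the new observation and is removed from $V$.

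Combining these pieces yields the bound. Each interaction reduces $|V|$ by at least one while never removing $f_C$, so starting from $|V| = |D|$ the process terminates with a single candidate after at most $|D| - 1$ interactions, and that candidate is $f_C$. The only point requiring care is the computability of the distinguishing input $x^\ast$: this is precisely where totality is used, since without the guarantee that every $f_i$ halts on every input the naive search for a disagreement might not terminate. Given totality the search always halts, the whole procedure is effective, and the stated bound follows. The edge case $|D| = 1$ is consistent, requiring zero interactions.
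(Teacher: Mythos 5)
Your proof is correct for the statement as literally written, but it takes a genuinely different route from the paper's, and the difference matters for what comes next. You use adaptive candidate elimination: at each step you pick two surviving candidates, search for an input on which they disagree (totality guaranteeing the search halts), query it, and discard at least one candidate, giving the $|D|-1$ bound. The paper instead fixes all probing inputs \emph{in advance}: it repeatedly splits subsets of $D$ via distinguishing inputs (its Lemma~\ref{lemma:split}, which is the same totality-plus-distinctness observation you make), building a binary tree whose $|D|$ leaves are singletons and whose $|D|-1$ branching nodes each carry one probing input; the interrogator submits all of these inputs regardless of $f_C$'s answers and reads off the identity from the outputs. The payoff of the paper's construction is that the probing inputs do not depend on $f_C$'s responses, so the bound holds for \emph{non-interactive} interrogators --- a fact the paper explicitly invokes immediately afterward and needs for the subsequent proposition that both interrogator types share the $|D|-1$ worst-case upper bound. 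Your argument is arguably more elementary, but because each query $x^\ast$ depends on the current viable set $V$, and hence on previous outputs, it only establishes the bound for interactive interrogators; unrolling your adaptive strategy into a fixed query set does not obviously stay within $|D|-1$ inputs. If you want your proof to support the paper's later claims, you would need to add the observation that the distinguishing inputs can be organized into a single precomputed tree, which is essentially the paper's proof.
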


An Interrogator is called \textit{interactive} if the value $x_j$ of the $j$th probing input depends on $f_C(x_1), \dots, f_C(x_{j-1})$, i.e., the outputs corresponding to the previous probing inputs.
This implies that the probing inputs generally depend on the specific function $f_C$ given to $I$.
Conversely, a \textit{non-interactive} Interrogator can only provide a fixed set of probing inputs to $f_C$, and their values do not depend on the outputs of $f_C$.
In the proof of Proposition~\ref{th:interaction_limit}, the probing inputs given to $f_C$ do not dynamically depend on $f_C$.
This means that the theorem holds for non-interactive Interrogators.
A natural question arises: Can interactive Interrogators identify a function using fewer interactions? Although there are instances where they need exponentially fewer interactions, in the worst-case scenario, both methods necessitate an equivalent number of interactions:

\begin{restatable}{proposition}{cd}
The upper bound for probing interactions required to identify a function from a given function set $D$ is $|D|-1$ for both interactive and non-interactive Interrogators.
\end{restatable}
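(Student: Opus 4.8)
The plan is to establish the upper bound $|D|-1$ directly for both interrogation modes: the non-interactive case is exactly Proposition~\ref{th:interaction_limit}, so the work is to supply a matching sufficiency argument for the interactive case and to recall the construction behind the non-interactive bound. Write $n = |D|$ and recall the standing hypothesis that the $n$ functions in $D$ are pairwise non-equivalent, so for every pair $f_i \ne f_j$ there is some argument on which they disagree. Since both functions are total computable, such a \emph{splitting input} can be located by searching $x = 0, 1, 2, \dots$ until a disagreement appears, and this search halts precisely because the two functions genuinely differ somewhere. I will use this splitting-input primitive throughout; note that the Interrogator can run it since it holds descriptions of all members of $D$.

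For the interactive Interrogator I would run an adaptive elimination procedure. It maintains a candidate set $R \subseteq D$, initialized to $R = D$. As long as $|R| \ge 2$, it selects any two distinct members of $R$, computes a splitting input $x$ on which they disagree, queries $f_C(x)$, and keeps in $R$ only those candidates $g$ with $g(x) = f_C(x)$. The key invariant is that every query strictly shrinks $R$: because $x$ was chosen to separate two surviving candidates, at least one of them has value $\ne f_C(x)$ and is discarded, \emph{regardless} of what the true output $f_C(x)$ turns out to be. Hence each interaction removes at least one function, so starting from $n$ candidates the procedure reaches $|R| = 1$ after at most $n-1$ interactions, and the unique remaining candidate is the identity of $f_C$. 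This gives the interactive upper bound $n - 1 = |D| - 1$.

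For the non-interactive Interrogator the bound is Proposition~\ref{th:interaction_limit} itself, whose construction uses a fixed, output-independent probing sequence. For completeness I would recall the underlying separating-set argument: build a fixed list of inputs greedily while tracking the partition of $D$ induced by the responses seen so far, starting from the single block $D$. While some block contains two distinct functions, append a splitting input for that pair; this input refines the partition by breaking at least one block into two, increasing the block count by at least one. Starting from one block and terminating when every block is a singleton ($n$ blocks) takes at most $n-1$ appended inputs, yielding a fixed probing sequence of length $\le |D|-1$ whose response vector uniquely determines $f_C$.

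Combining the two cases yields the stated upper bound of $|D|-1$ for both interactive and non-interactive Interrogators; interaction can only reduce the number of queries on favourable instances, since it adapts its splitting inputs to the observed outputs, but the worst-case guarantee coincides at $|D|-1$. The main obstacle I anticipate is the elimination-count bookkeeping in the interactive case, namely arguing cleanly that a splitting input removes at least one candidate irrespective of the value of $f_C(x)$, together with the minor computability point that splitting inputs are always found by a terminating search precisely because distinct total computable functions must disagree somewhere.
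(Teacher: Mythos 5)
Your proposal proves only the easy half of this proposition and omits the half that the paper's proof is actually about. Everything you establish---the adaptive elimination scheme for the interactive case and the greedy partition-refinement recap for the non-interactive case---amounts to the sufficiency claim that $|D|-1$ interactions always suffice. But for an interactive Interrogator that direction is immediate from Proposition~\ref{th:interaction_limit}: the probing inputs constructed there are fixed in advance, so an interactive Interrogator can simply replay them (your elimination argument is a pleasant alternative derivation of the same inequality, but it adds nothing beyond what is already known). Read in context---the proposition is introduced with ``in the worst-case scenario, both methods necessitate an equivalent number of interactions''---its substantive content is tightness: $|D|-1$ is the \emph{least} worst-case bound, i.e., adaptivity cannot push the worst case below $|D|-1$. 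You assert in your closing paragraph that ``the worst-case guarantee coincides at $|D|-1$,'' but you only ever prove the one-sided inequality (interactive worst case $\leq |D|-1$); you never exhibit a function set on which even an interactive Interrogator is forced to spend $|D|-1$ queries.

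That matching lower bound is exactly what the paper's proof supplies, via the set $\{\xi_i \mid 1 \leq i \leq L\}$ with $\xi_i(n) = 0$ if $n = i$ and $\xi_i(n) = n$ otherwise. A query at input $x$ either returns $0$, identifying $f_C = \xi_x$, or returns $x$, which eliminates only the single candidate $\xi_x$ and conveys no information that could guide the choice of the next query---so adaptivity is worthless here, and in the worst case all of the inputs $1, \dots, L-1$ must be tried. Without such a construction your argument does not support the proposition as the paper intends it; indeed, the whole point of this proposition is to set up the contrast with Proposition~\ref{prop:fewer_interactions}, where interactivity \emph{does} give an exponential advantage on favourable function sets while, by the construction above, buying nothing in the worst case.
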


\begin{restatable}{proposition}{ef}
\label{prop:fewer_interactions}
There exist function sets for which an interactive Interrogator requires exponentially fewer probing interactions to identify a member than does a non-interactive one.
\end{restatable}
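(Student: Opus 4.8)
The plan is to exhibit an explicit family of function sets $D$, parametrized by an integer $k$, with $|D| = 2^k$, on which an interactive Interrogator finishes after $k = \log_2|D|$ queries while \emph{every} non-interactive Interrogator needs at least $2^{k-1} = |D|/2$ queries. The natural object to encode the members of $D$ is a complete binary tree: I would let each function ``reveal'' one step of its own root-to-leaf path per query, so that discovering the path requires following the answers, which is precisely what a non-interactive Interrogator cannot do.

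Concretely, I would take the complete binary tree of depth $k$ with nodes numbered $1,\dots,2^{k+1}-1$ in heap order (node $v$ has children $2v$ and $2v+1$) and identify $D$ with its $2^k$ leaves. For a leaf $\ell$ with root-to-leaf path $1 = v_0, v_1, \dots, v_k = \ell$, define $f_\ell:\mathbb N\to\mathbb N$ by $f_\ell(v_j) = v_{j+1}$ for $0 \le j < k$ and $f_\ell(x) = 0$ otherwise. Each $f_\ell$ is total and computable (the machine merely checks membership in a length-$k$ path), and distinct leaves induce distinct paths, so the functions are pairwise non-equivalent as required by the framework. For the interactive upper bound I would run a descent: query the root to read $v_1 = f_C(1)$, then query $v_1$ to read $v_2$, and so on; after exactly $k$ queries the Interrogator has reconstructed the entire path and hence identified $\ell$. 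This uses $k = \log_2|D|$ interactions.

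The heart of the argument, and the step I expect to be the main obstacle, is the non-interactive lower bound. The key is a single-point-difference observation: if $\ell$ and $\ell'$ are the two leaves sharing a common parent $p$ at depth $k-1$, then their paths coincide up to $p$, and $f_\ell$ and $f_{\ell'}$ agree on every input except $p$, where $f_\ell(p)=\ell \ne \ell' = f_{\ell'}(p)$. Hence any fixed query set $Q$ that separates $f_\ell$ from $f_{\ell'}$ must contain $p$. Since this holds for each of the $2^{k-1}$ depth-$(k-1)$ nodes independently, any non-interactive Interrogator succeeding on all of $D$ must satisfy $|Q| \ge 2^{k-1} = |D|/2$. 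For completeness I would note the bound is tight: querying exactly these $2^{k-1}$ parents works, because for each leaf the answer vector has a single nonzero entry equal to the leaf itself.

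Comparing the $\log_2|D|$ interactive queries with the forced $\Omega(|D|)$ non-interactive queries gives the claimed exponential separation. The delicate point to get right is convincing oneself that no query placed \emph{deeper} in the tree can substitute for querying $p$ when separating the sibling leaves; this is exactly what the single-point-difference observation rules out, since below $p$ the two functions are identically zero and thus carry no distinguishing information. Once that observation is stated cleanly, the counting argument over the $2^{k-1}$ sibling pairs closes the proof.
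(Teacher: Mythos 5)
Your proof is correct, but it takes a genuinely different route from the paper's. The paper constructs a set of generative functions for context-sensitive languages $L_{m_2,\dots,m_k} = \{a_1^n a_2^{n+m_2}\cdots a_k^{n+m_k}\}$: the interactive Interrogator identifies one of the $M^{k-1}$ languages in $O(Mk)$ queries by autoregressively extending a string token by token, while a non-interactive one must enumerate on the order of $M^{k-1}$ prefixes because the single string that reveals $m_k$ cannot be constructed without seeing the earlier answers. Your pointer-chasing construction on a complete binary tree isolates the same mechanism --- the distinguishing input can only be discovered by following the function's answers --- but in a more distilled form: the single-point-difference lemma for sibling leaves gives a clean, airtight lower bound of $|D|/2$ queries against an interactive upper bound of $\log_2|D|$, and you even establish tightness. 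This is arguably sharper than the paper's counting argument, which is stated somewhat informally (``only then it is guaranteed that the only informative string \dots is among the probing inputs''). What the paper's choice buys instead is a direct link to its experiments: the languages in the proof are essentially those of the Formal Languages dataset (Section~5.1), so the proposition doubles as a theoretical prediction for the empirical gap between interactive and non-interactive probing observed there. Both arguments validly establish the proposition.
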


Section~\ref{sec:experiments} demonstrates that these theoretical concepts are mirrored in empirical results. In one dataset (formal languages), interactive probing significantly outperforms non-interactive probing. However, in another dataset, both methods show similar performance.

\section{Self-Supervised Learning of RNN Weight Representations}
\label{sec:self_supervised_learning}

We propose a general-purpose method for learning representations of RNN weights.
It is based on the idea that the RNN weight representation should contain all the information necessary in order to emulate the RNN's functionality.
A very similar technique is used by \citet{raileanu2020fast} to learn representations of (non-recurrent) policies based on their trajectories.

The RNN $f_\theta$ interacts with a potentially stochastic environment, 
$\mathcal{E}$, that maps an RNN's output $y$ to a new input $x$. 
The environment may have its own hidden state $\eta$. 
By sequentially interacting with the environment, the RNN produces a rollout defined by:
\begin{equation*}
\left\{
\begin{aligned}
x_t, \eta_t &= \mathcal{E}(y_{t-1}, \eta_{t-1}) \\
y_t, h_t &= f_\theta(x_t, h_{t-1}),
\end{aligned}
\right.
\end{equation*}
with fixed initial states $y_0, \eta_0$ and $h_0$. 
For instance, $f_\theta$ might be an autoregressive generative model, with $\mathcal{E}$ acting as a stochastic environment that receives a probability distribution over some language tokens, $y_t$---the output of $f$ at time step $t$---, and produces a representation (e.g., a one-hot vector) of the new input token $x_{t+1}$.
When the environment is stochastic, numerous rollouts can be generated for any $\theta \in \Theta$. 
A rollout sequence of a function $f_\theta$ in environment $\mathcal{E}$ has the form $S_\theta = (x_1, y_1, x_2, y_2, \dots)$.

To train an RNN weight encoder $E_{\phi}$, we consider an Emulator $A_{\xi}: \mathbb{R}^X \times \mathbb{R}^B \times \mathbb{R}^Z \to \mathbb{R}^Y \times \mathbb{R}^B; (x, b_{t-1}, z) \mapsto (\tilde{y}, b_t)$, parametrized by $\xi \in \Xi$. 
The Emulator is an RNN with hidden state $b$ that learns to imitate different RNNs $f_\theta$ based on their function encoding $z = E(\theta)$.

We consider a dataset $\mathcal{D} = \{(\theta_i, S_{\theta_i}) | i = 1, 2, \dots\}$ composed of tuples, each containing the parameters of a different RNN and a corresponding rollout sequence. 
We assume that all RNNs have the same initial state $h_0$ but have been trained on different tasks.
The Encoder $E_{\phi}$ and the Emulator $A_{\xi}$ are jointly trained by minimizing a loss function $\mathcal{L}$. 
This loss function measures the behavioral similarity between an RNN $f_{\theta}$ and the Emulator $A_{\xi}$, which is conditioned on the function representation $z=E_{\phi}(\theta)$ of $\theta$ as produced by the Encoder $E_{\phi}$ (see Figure~\ref{fig:pretraining}). 
Put simply, the Emulator uses the representations of a set of diverse RNNs $f_{\theta}$ to imitate their behavior:\footnote{The recurrence of $A_\xi$ is omitted for simplicity.}
\begin{equation}
\label{eq:loss}
    \min_{\phi, \xi} \mathbb{E}_{(\theta, S) \sim \mathcal{D}} \sum_{(x_i, y_i) \in S} \mathcal{L} \big( A_{\xi} (x_i, E_{\phi}(\theta)) , y_i \big).
\end{equation}
In the case of continuous outputs $y$, the mean-squared error provides a suitable loss function $\mathcal{L}$. 
For categorical outputs, we employ the reverse Kullback-Leibler divergence because of its mode-seeking behavior.

If we are interested only in the RNN weight encoder $E_\phi$, the emulator $A_\xi$ can be discarded after training. 
However, there might be potential applications for the emulator, for example, in the context of imitation learning~\cite{zare2023survey}, behavior cloning~\cite{torabi2018behavioral}, or consolidating the knowledge from many different models~\cite{kj2020meta}.

\begin{figure}[t]
\centering
\includegraphics[width=0.7\linewidth]{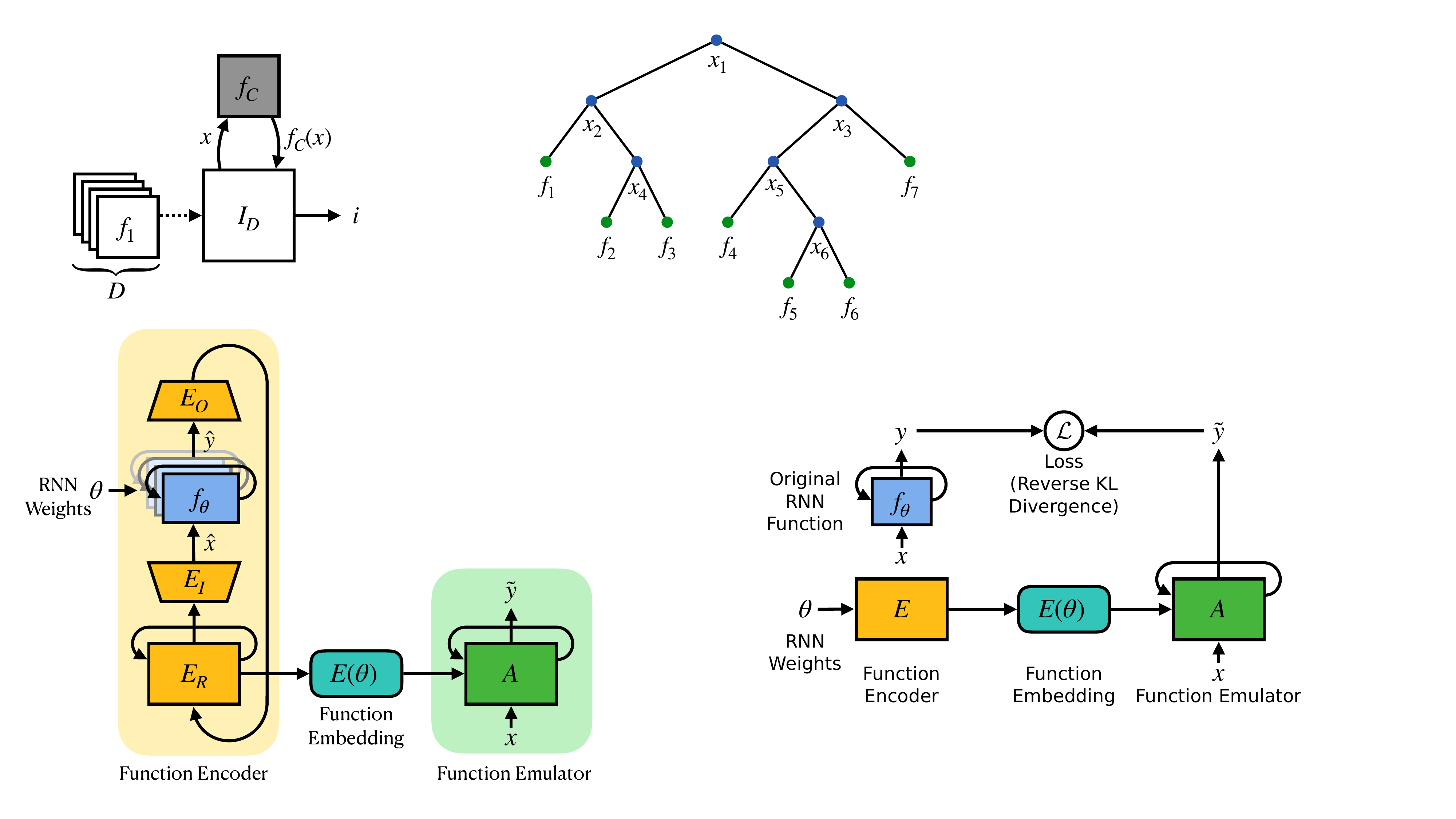}
\caption{Emulation-based self-supervised training. The encoder $E$ is trained to generate embeddings of $\theta$ that allow $A$ to emulate $f_\theta$.}
\label{fig:pretraining}
\end{figure}

\section{Datasets}
\label{sec:datasets}
To evaluate the methods described and foster further research, we develop and release two ``model zoo'' datasets for RNNs. The first dataset focuses on modeling formal languages, while the second is centered around predicting digits in a tiled Sequential MNIST format.
Both datasets share a similar structure.
We train 1000 LSTMs (each with two layers and a hidden state size of 32) on various tasks.
The weights $\theta$ of each model are saved at $9$ fixed training steps, along with 100 rollouts $S_\theta$ and additional data, such as the current performance on its task.
The datasets are divided into training, validation, and out-of-distribution (OOD) test splits, with tasks in each split being non-overlapping. The tasks in the OOD set are structurally slightly different.

\subsection{Formal Languages}
\label{sec:formal_languages}

The models are auto-regressive language models, trained on different formal languages using teacher forcing and the standard cross-entropy language modelling objective.
Let $a, b, c$ and $d$ be the four tokens of a language.
We define a language as $L_{m_b, m_c, m_d} := \bigl\{ a^{n} b^{n + m_b} c^{n + m_c} d^{n + m_d} | n \geq - \min \{0, m_b, m_c, m_d\} \bigl\}$.
This means that, in a string from such a language, the number of appearances of the tokens, relative to each other, is determined by
$m_b, m_c$ and $m_d$.
For example, the strings from the language $L_{1, -1, 2}$ are $\{\{abbddd\}, \{aabbbcdddd\}, \{aaabbbbccddddd\}, \dots\}$. 
Each model is trained on one language from the set $G_L := \{L_{m_b, m_c, m_d} | m_b, m_c, m_d \in \{-3, -2, \dots, 2\}\}$. 
Note that these languages are essentially ones used also in the proof of proposition~\ref{prop:fewer_interactions}.
In $G_L$, there are a total of $6^3 = 216$ different languages.

All models are trained on sequences of length 42, including one begin-of-sequence (BOS) and one end-of-sequence (EOS) token.
The maximum value of $n$ is 10.
If a language string is shorter than 42, it is padded at the end with EOS tokens.
The OOD test set contains the RNNs trained on languages where the sum of the absolute values of $m_b, m_c$ and $m_d$ is the smallest.

\subsection{Tiled Sequential MNIST}
\label{sec:sequential_mnist}

The models of this dataset are trained to classify MNIST digits presented in a sequential format.
Unlike the typical pixel-wise sequence, each digit is represented as a sequence of 49 $4\times4$ tiles (plus BOS and EOS tokens).
This approach improves computational efficiency for both RNN training and weight representation experiments.
After each tile of the sequence, the model predicts the digit.
The loss is the mean cross-entropy of all predictions in the sequence. However, the accuracy of each model is assessed based on the final prediction, i.e., when the model has seen the entire digit.
The dataset's task involves rotating MNIST digits. Each model is exposed to the entire MNIST dataset, with images rotated by a unique random angle. For the training and validation sets, the rotations range from 0 to 311 degrees, while for the OOD test set, they range from 312 to 360 degrees.

\section{Experiments and Results}
\label{sec:experiments}

Our empirical investigation involves two experimental phases. In the first phase, we apply the emulation-based representation method described in Section~\ref{sec:representation_learning} to learn representations for RNNs from Formal Languages and the Sequential MNIST dataset. The second phase is dedicated to predicting properties of the RNNs. These predictions are either based on representations learned in the first phase or derived from fully supervised models trained from scratch.
We conduct the main experiments using 15 different random seeds for each model. The outcomes are presented as bootstrapped means with 95\% confidence intervals.
For RNNs trained on the Formal Languages dataset, performance is measured by the proportion of correctly generated strings (i.e., strings belonging to the language on which it was trained). For the Sequential MNIST dataset, we assess performance using standard digit classification validation accuracy.
For the Flattened Weights and the Parameter Transformer encoders, the training data is augmented by randomly permuting the neurons of the input RNN.

\subsection{Representation Learning}
\label{sec:representation_learning}

\begin{table}[]
\caption{Self-supervised validation losses.}    
\label{tab:pretraining_validation_results}
   \begin{center}
    \begin{small}
    \resizebox{\columnwidth}{!}{%
    \setlength{\tabcolsep}{2pt}
    \begin{tabular}{lcc}
    \toprule
    \textbf{Encoder} & \textbf{Formal Languages} & \textbf{Sequential MNIST}  \\
    \midrule

    Layer-Wise Statistics & 0.051 $\scriptstyle{ (0.050, 0.053) }$ & 0.039 $\scriptstyle{ (0.038, 0.039) }$ \\
    Flattened Weights & 0.045 $\scriptstyle{ (0.045, 0.046) }$ & 0.024 $\scriptstyle{ (0.024, 0.024) }$ \\
    Parameter Transformer & 0.043 $\scriptstyle{ (0.042, 0.044) }$ & 0.067 $\scriptstyle{ (0.067, 0.067) }$ \\
    DWSNet & 0.046 $\scriptstyle{ (0.046, 0.046) }$ & 0.024 $\scriptstyle{ (0.023, 0.025) }$ \\
    Non-Interactive Probing & 0.023 $\scriptstyle{ (0.019, 0.029) }$ & \textbf{0.017} $\scriptstyle{ (0.016, 0.017) }$ \\
    Interactive Probing & \textbf{0.015} $\scriptstyle{ (0.008, 0.022) }$ & \textbf{0.017} $\scriptstyle{ (0.017, 0.018) }$ \\
    \bottomrule
    \end{tabular}}
    \end{small}
    \end{center}
\end{table}

\begin{figure}[h]
    \centering
    \includegraphics[width=0.75\linewidth]{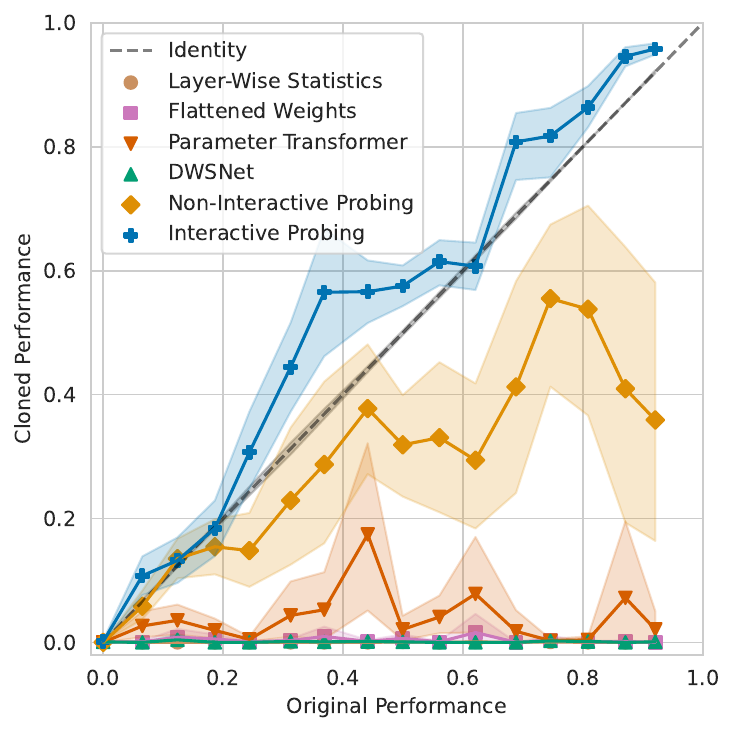}
    \vspace{-0.2cm}
    \caption{$f_\theta$'s original performance on formal language generation vs. the performance of $A_\xi$'s emulation based on $E_\phi(\theta)$ (validation data).}
    \label{fig:bach_emulation_performance}
\end{figure}

The six different encoder architectures are trained according to Objective~\ref{eq:loss}.
The hyperparameters of these encoders are selected to ensure a comparable number of parameters across all models.
Each encoder generates a 16-dimensional representation $z$.
An LSTM with two layers functions as the emulator $A_\xi$. 
The conditioning of $A_\xi$ on an RNN $f_\theta$ is implemented by incorporating a linear projection of the corresponding representation $z$ to the BOS token of the input sequence of $A_\xi$.
More details and hyperparameters can be found in Appendix~\ref{app:experiment_details}.

Table~\ref{tab:pretraining_validation_results} displays the emulation losses for the different encoder types on both the Formal Languages and the Sequential MNIST validation datasets. For Formal Languages, the interactive probing method outperforms the others. In the case of Sequential MNIST, the performance of both probing encoders is quite similar and superior to that of the mechanistic encoders.
Figure~\ref{fig:bach_emulation_performance} shows the emulation effectiveness of various RNNs from the Formal Languages dataset. 
We compute 16 equally spaced target performance values between the best and the worst performances in the datasets. 
For each target performance, we select the 15 RNNs with performances closest to each target. 
The x-position of each point represents the mean of the original performances, and the y-positions represent the mean of the emulated performances (the shaded areas give the 95\% confidence intervals).
The variance of the original performances for each point is relatively low, as can be seen from the shaded area around the identity line.
For this dataset, only the interactive probing encoder yields representations that enable $A_\xi$ to effectively emulate the original RNN.
Figure~\ref{fig:bach_original_vs_cloned} in the Appendix shows the analogous results also for the Formal Languages training and test set, Figure~\ref{fig:mnist_original_vs_cloned} for the Sequential MNIST datasets.

Figure~\ref{fig:embedding_space_bach} (top) examines the structure of the embedding space created by the interactive probing encoder for the Formal Languages validation set. 
It visually demonstrates that our method successfully learns coherent representation spaces of RNN weights.
Each point represents an RNN, reduced from $Z$ to two dimensions using principal component analysis. 
Each language forms its own cluster, with each cluster exhibiting a gradient representing the generation accuracy of the RNNs.
We highlight one cluster, corresponding to the language $L_{-2, 2, -2}$.
In contrast, t-SNE dimensionality reduction (bottom) fails to represent such structures in RNN weights. 
In t-SNE reductions, the weights of the RNN models for a single language are scattered across the entire space.
Similarly, Figure~\ref{fig:embedding_space_mnist} shows the embedding space for the Sequential MNIST dataset, where the task involves rotating MNIST digits. 
This rotation, a continuous scalar, is evident in the embedding space, alongside the classification accuracy of each model.  
In the t-SNE plot, although small clusters are noticeable for the nine snapshots of each run, the overall embedding space lacks a coherent structure. 
Visualizations of embedding spaces for all encoder architectures and OOD test sets are presented in Appendix~\ref{app:embedding_viz}.

\begin{figure*}[t]
    \centering
    \begin{minipage}[b]{0.48\linewidth}
        \centering
        \includegraphics[width=1.\linewidth]{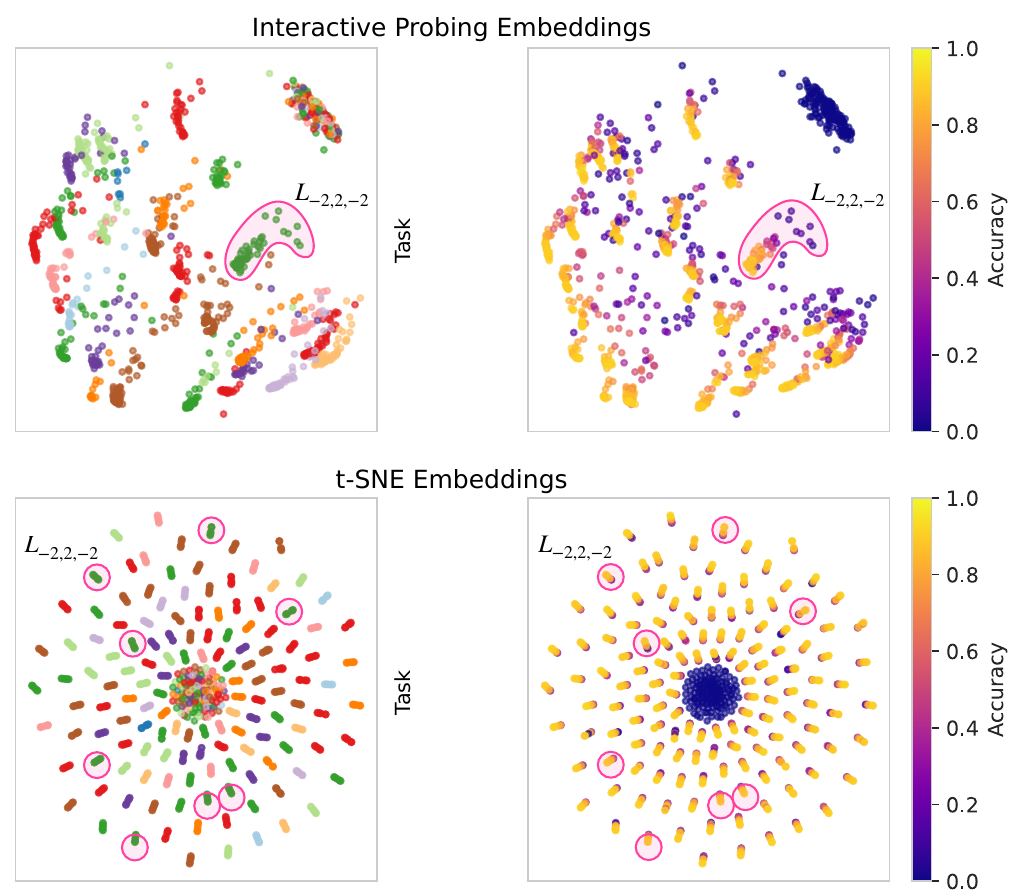}
        \caption{Visualization of embeddings of RNNs from the Formal Languages validation set.}
        \label{fig:embedding_space_bach}
    \end{minipage}\hfill
    \begin{minipage}[b]{0.48\linewidth}
        \centering
        \includegraphics[width=1.\linewidth]{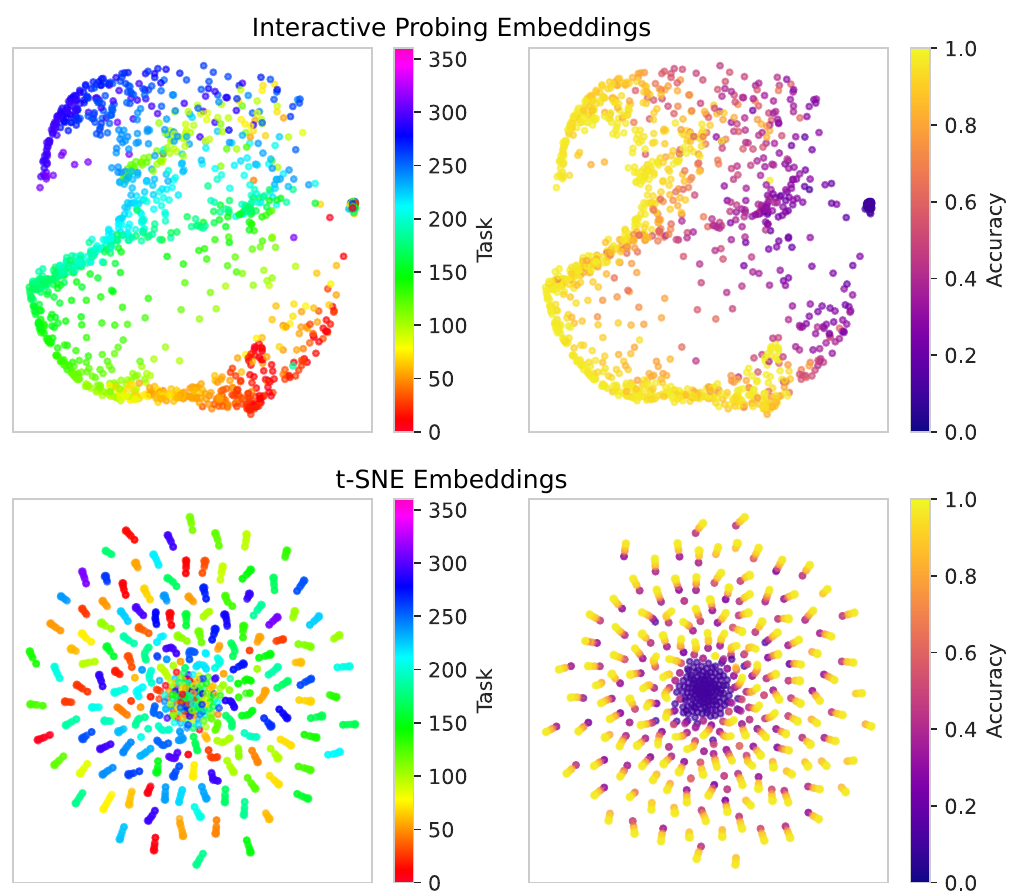}
        \caption{Visualization of embeddings of RNNs from the Sequential MNIST validation set.}
        \label{fig:embedding_space_mnist}
    \end{minipage}
\end{figure*}

\subsection{Downstream Property Prediction}
\label{sec:downstream_prediction}

To assess the effectiveness of representations learned through self-supervised learning, we evaluate them in predicting various properties of RNNs.
We train an MLP as a supervised prediction model using these representations, obtained from a fixed, pre-trained encoder $E_\phi$.
The properties to be predicted are stored as metadata for each RNN within the datasets.
For RNNs from the Formal Languages dataset, the properties are task and accuracy.
For RNNs from the Sequential MNIST dataset, the properties are task (i.e., the rotation of the digits), accuracy, training step, and generalization gap.
The formal language task is represented as a three-hot vector (for values $m_b$, $m_c$, and $m_d$), and the predictor is trained using binary cross-entropy.
For all scalar properties like accuracy, Sequential MNIST task, and generalization gap, the predictor is trained using mean squared error loss.
The Sequential MNIST training step prediction is framed as a 9-way classification problem, using cross-entropy loss.

For comparison, we also trained models for property prediction in a purely supervised manner. The setup is the same as for the pre-trained scenario, consisting of an RNN weight encoder followed by an MLP predictor. However, in this case, the encoder is not pre-trained but is randomly initialized and trained end-to-end with the predictor. Our primary interest lies in the generalization capabilities of the pre-trained $E_\phi$, rather than the predictor itself. Consequently, the training data for the predictor in both supervised and pre-trained settings includes half of the OOD test set.
We place particular emphasis on \textit{task} prediction (specifically, the type of formal language or the rotation of the MNIST dataset). This is because it demands a genuine understanding of the RNN's function, which goes beyond mere high-level statistical analysis of the weights.
Figure~\ref{fig:downstream_results_task_classification} (top) demonstrates that for task prediction in the Formal Languages dataset (OOD split), the pre-trained representations from an interactive probing encoder significantly outperform those from other encoders, as well as purely supervised models.
For the Sequential MNIST dataset (Figure~\ref{fig:downstream_results_task_classification}, bottom), both types of probing encoders surpass other architectures. In the supervised scenario, the non-interactive probing encoder excels, but the interactive version does not perform as well. 
We hypothesize that this is due to the limited information provided by supervised training compared to self-supervised pre-training, which does not offer sufficient feedback for the interactive encoder to develop effective probing sequences. Complete results for downstream predictions, both supervised and pre-trained, on validation and OOD test data are shown in Figures~\ref{fig:downstream_bar_chart_bach} and \ref{fig:downstream_bar_chart_mnist} in the Appendix.
Although these results do not definitively favor any single encoder architecture, they can be summarized as follows:
for the Formal Languages dataset, on both task and performance prediction, interactive probing generally yields the best results, particularly in the pre-trained setting. For the Sequential MNIST dataset, non-interactive probing performs best in the supervised setting, while both probing architectures are effective when pre-trained. In all other property predictions (accuracy, generalization gap, training step), DWSNet performs most consistently.

\begin{figure}[h]
\centering
\includegraphics[width=0.85\linewidth]{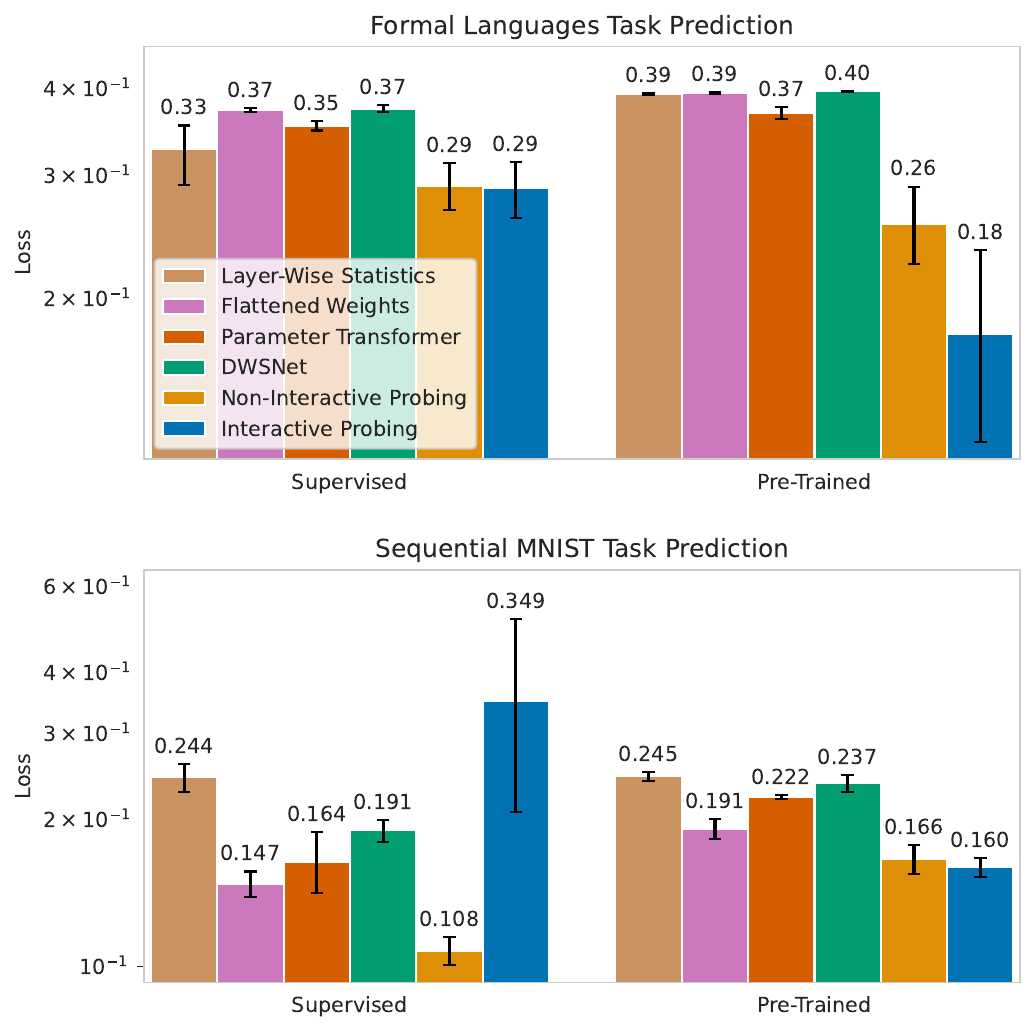}
\caption{Downstream performance of pre-trained and purely supervised model on task prediction for the Formal Languages and Sequential MNIST OOD test set.}
\label{fig:downstream_results_task_classification}
\end{figure}

\section{Discussion}
\label{discussion}

To effectively learn useful representations of RNN weights, three key components are necessary. First, datasets comprising a variety of trained RNNs are essential. The two datasets presented in this study, based on formal languages and tiled Sequential MNIST, are designed to support rapid experimental cycles and evaluations. Despite their scale, they offer sufficient complexity and challenge to analyze various approaches to the representation learning problem. These datasets address distinct facets of potential RNN applications: generative modeling of formal languages, which suits RNNs due to its precise, algorithmic nature, and digit classification, a less precise perceptual task.

Second, a pre-training method is required. We suggest a method rooted in the understanding that only those representations of RNN weights that can accurately emulate the original RNN possess meaningful information about the network. However, this is a necessary but not sufficient condition for effective representations. If the RNNs in the dataset used to train the encoder differ only superficially (and not algorithmically), then the derived representations would only reflect these superficial properties. 
Therefore, the task family from the training data must be complex enough to necessitate extracting complex information from the RNNs, but still be learnable by some method.
Our experiments demonstrate that only an interactive probing encoder can adequately capture the formal language task family. The Sequential MNIST task family (digit rotation) is simpler to learn, although the embedding visualizations suggest that Layer-wise Statistics and Parameter Transformer approaches fall short (see Figure~\ref{fig:embedding_space_viz_mnist_task_return}).

Third, an encoder architecture capable of handling the complex structure of RNN weights as input is crucial.
Except for Layer-Wise statistics, mechanistic encoders can approximate any function within the weight space.
Nonetheless, in practice, functionalist approaches outperform them both in self-supervised training objectives and task prediction. 
Our findings confirm the theoretical prediction that interactive probing is more efficient for certain datasets (Proposition~\ref{prop:fewer_interactions}), as evidenced by the results on the Formal Languages datasets.
However, interactive probing may suffer from training stability issues in some instances. 
This issue might be mitigated in the future through different training methods, loss functions, or regularizations.
When the interactive property is unnecessary or too complex to train, such as with the Sequential MNIST dataset, non-interactive probing may yield better results. 
Additionally, functionalist encoders offer the advantage of being agnostic to the precise architecture of $f_\theta$.
In more straightforward tasks like supervised property prediction, mechanistic encoders like or Layer-Wise Statics or DWSNet excel.

To date, our work is limited to relatively small RNNs. 
All approaches presented here can in principle be adapted in a straightforward way to other architectures, such as state space models~\cite{gu2021efficiently}, deep equilibrium models~\cite{bai2019deep}, or transformers~\cite{vaswani2017attention}.
For scaling to larger models, we see the greatest potential in functionalist probing approaches. 
These are, in principle, agnostic to the architecture and size of the RNN (or any general sequence model), provided it is differentiable. 
By employing policy gradient methods, we could even process models that are impossible or prohibitively expensive to differentiate.
\looseness=-1

\section{Conclusion and Future Work}
We have proposed a framework that uses self-supervised learning to derive useful representations of RNN weights. 
Through this framework, we have trained and evaluated various weight encoder architectures.  
Notably, our newly proposed interactive probing approach is the only method capable of learning suitable representations for formal language tasks.
This finding corroborates our theoretical results, demonstrating that interactive probing can, in certain situations, outperform non-interactive probing.

This work establishes a foundation for numerous future applications. For instance, the techniques introduced here can be applied in the context of reinforcement learning within partially observable environments, where they can facilitate policy representation and improvement (see e.g., \citet{raileanu2020fast, faccio2020parameter, faccio2022general}), as well as exploration and skill discovery \cite{herrmann2022learning}.
Additionally, RNN weight representations could be beneficial in meta-learning and few-shot learning scenarios. 
Another field where our work might be useful is the mechanistic interpretability of sequence models (e.g., \cite{weiss2018extracting, olsson2022context}). 
Typically, it involves a lot of labor-intensive reverse engineering. Training models to create meaningful representations of RNN (or general sequence model) weights may assist or even partially automate these efforts. 
Our functionalist probing approaches have the potential to extract information not only from populations of small models but also from large foundational models. 
This can be achieved by conditioning either the encoder or the foundational model itself on a specific task, a concept that aligns with discussions in previous research \cite{learningtothink2015, zeng2022socratic, zhuge2023mindstorms}.

\section*{Acknowledgements}
We thank Louis Kirsch, Piotr Piekos, Aditya Ramesh, and Wenyi Wang for insightful discussions. 
This work was supported by the ERC Advanced Grant (no: 742870).
We also thank NVIDIA Corporation for donating a DGX-1 as part of the Pioneers of AI
Research Award.



\bibliography{main}

\begin{thebibliography}{38}
\providecommand{\natexlab}[1]{#1}
\providecommand{\url}[1]{\texttt{#1}}
\expandafter\ifx\csname urlstyle\endcsname\relax
  \providecommand{\doi}[1]{doi: #1}\else
  \providecommand{\doi}{doi: \begingroup \urlstyle{rm}\Url}\fi

\bibitem[Bai et~al.(2019)Bai, Kolter, and Koltun]{bai2019deep}
Bai, S., Kolter, J.~Z., and Koltun, V.
\newblock Deep equilibrium models.
\newblock \emph{Advances in neural information processing systems}, 32, 2019.

\bibitem[Del{\'e}tang et~al.(2022)Del{\'e}tang, Ruoss, Grau-Moya, Genewein, Wenliang, Catt, Cundy, Hutter, Legg, Veness, et~al.]{deletang2022neural}
Del{\'e}tang, G., Ruoss, A., Grau-Moya, J., Genewein, T., Wenliang, L.~K., Catt, E., Cundy, C., Hutter, M., Legg, S., Veness, J., et~al.
\newblock Neural networks and the chomsky hierarchy.
\newblock \emph{arXiv preprint arXiv:2207.02098}, 2022.

\bibitem[Dupont et~al.(2022)Dupont, Kim, Eslami, Rezende, and Rosenbaum]{dupont2022data}
Dupont, E., Kim, H., Eslami, S., Rezende, D., and Rosenbaum, D.
\newblock From data to functa: Your data point is a function and you can treat it like one.
\newblock \emph{arXiv preprint arXiv:2201.12204}, 2022.

\bibitem[Eilertsen et~al.(2020)Eilertsen, J{\"o}nsson, Ropinski, Unger, and Ynnerman]{eilertsen2020classifying}
Eilertsen, G., J{\"o}nsson, D., Ropinski, T., Unger, J., and Ynnerman, A.
\newblock Classifying the classifier: dissecting the weight space of neural networks.
\newblock \emph{arXiv preprint arXiv:2002.05688}, 2020.

\bibitem[Faccio et~al.(2020)Faccio, Kirsch, and Schmidhuber]{faccio2020parameter}
Faccio, F., Kirsch, L., and Schmidhuber, J.
\newblock Parameter-based value functions.
\newblock \emph{Preprint arXiv:2006.09226}, 2020.

\bibitem[Faccio et~al.(2022{\natexlab{a}})Faccio, Herrmann, Ramesh, Kirsch, and Schmidhuber]{faccio2022goal}
Faccio, F., Herrmann, V., Ramesh, A., Kirsch, L., and Schmidhuber, J.
\newblock Goal-conditioned generators of deep policies.
\newblock \emph{arXiv preprint arXiv:2207.01570}, 2022{\natexlab{a}}.

\bibitem[Faccio et~al.(2022{\natexlab{b}})Faccio, Ramesh, Herrmann, Harb, and Schmidhuber]{faccio2022general}
Faccio, F., Ramesh, A., Herrmann, V., Harb, J., and Schmidhuber, J.
\newblock General policy evaluation and improvement by learning to identify few but crucial states.
\newblock \emph{arXiv preprint arXiv:2207.01566}, 2022{\natexlab{b}}.

\bibitem[Gu et~al.(2021)Gu, Goel, and R{\'e}]{gu2021efficiently}
Gu, A., Goel, K., and R{\'e}, C.
\newblock Efficiently modeling long sequences with structured state spaces.
\newblock \emph{arXiv preprint arXiv:2111.00396}, 2021.

\bibitem[Harb et~al.(2020)Harb, Schaul, Precup, and Bacon]{harb2020policy}
Harb, J., Schaul, T., Precup, D., and Bacon, P.-L.
\newblock Policy evaluation networks.
\newblock \emph{arXiv preprint arXiv:2002.11833}, 2020.

\bibitem[Hecht-Nielsen(1990)]{hecht1990algebraic}
Hecht-Nielsen, R.
\newblock On the algebraic structure of feedforward network weight spaces.
\newblock In \emph{Advanced Neural Computers}, pp.\  129--135. Elsevier, 1990.

\bibitem[Herrmann et~al.(2022)Herrmann, Kirsch, and Schmidhuber]{herrmann2022learning}
Herrmann, V., Kirsch, L., and Schmidhuber, J.
\newblock Learning one abstract bit at a time through self-invented experiments encoded as neural networks.
\newblock \emph{arXiv preprint arXiv:2212.14374}, 2022.

\bibitem[Hochreiter \& Schmidhuber(1997)Hochreiter and Schmidhuber]{Hochreiter:97lstm}
Hochreiter, S. and Schmidhuber, J.
\newblock {Long Short-Term Memory}.
\newblock \emph{Neural Computation}, 9\penalty0 (8):\penalty0 1735--1780, 1997.

\bibitem[Hornik(1991)]{hornik1991approximation}
Hornik, K.
\newblock Approximation capabilities of multilayer feedforward networks.
\newblock \emph{Neural networks}, 4\penalty0 (2):\penalty0 251--257, 1991.

\bibitem[Joseph \& N~Balasubramanian(2020)Joseph and N~Balasubramanian]{kj2020meta}
Joseph, K. and N~Balasubramanian, V.
\newblock Meta-consolidation for continual learning.
\newblock \emph{Advances in Neural Information Processing Systems}, 33:\penalty0 14374--14386, 2020.

\bibitem[Koutn{\'i}k et~al.(2010)Koutn{\'i}k, Gomez, and Schmidhuber]{koutnik:gecco10}
Koutn{\'i}k, J., Gomez, F., and Schmidhuber, J.
\newblock Evolving neural networks in compressed weight space.
\newblock In \emph{Proceedings of the 12th Annual Conference on Genetic and Evolutionary Computation}, pp.\  619--626, 2010.

\bibitem[Koutn{\'\i}k et~al.(2013)Koutn{\'\i}k, Cuccu, Schmidhuber, and Gomez]{koutnik2013evolving}
Koutn{\'\i}k, J., Cuccu, G., Schmidhuber, J., and Gomez, F.
\newblock Evolving large-scale neural networks for vision-based reinforcement learning.
\newblock In \emph{Proceedings of the 15th annual conference on Genetic and evolutionary computation}, pp.\  1061--1068, 2013.

\bibitem[Loshchilov \& Hutter(2017)Loshchilov and Hutter]{loshchilov2017decoupled}
Loshchilov, I. and Hutter, F.
\newblock Decoupled weight decay regularization.
\newblock \emph{arXiv preprint arXiv:1711.05101}, 2017.

\bibitem[Merrill(2019)]{merrill2019sequential}
Merrill, W.
\newblock Sequential neural networks as automata.
\newblock \emph{arXiv preprint arXiv:1906.01615}, 2019.

\bibitem[Navon et~al.(2023)Navon, Shamsian, Achituve, Fetaya, Chechik, and Maron]{navon2023equivariant}
Navon, A., Shamsian, A., Achituve, I., Fetaya, E., Chechik, G., and Maron, H.
\newblock Equivariant architectures for learning in deep weight spaces.
\newblock \emph{arXiv preprint arXiv:2301.12780}, 2023.

\bibitem[Olsson et~al.(2022)Olsson, Elhage, Nanda, Joseph, DasSarma, Henighan, Mann, Askell, Bai, Chen, et~al.]{olsson2022context}
Olsson, C., Elhage, N., Nanda, N., Joseph, N., DasSarma, N., Henighan, T., Mann, B., Askell, A., Bai, Y., Chen, A., et~al.
\newblock In-context learning and induction heads.
\newblock \emph{arXiv preprint arXiv:2209.11895}, 2022.

\bibitem[Raileanu et~al.(2020)Raileanu, Goldstein, Szlam, and Fergus]{raileanu2020fast}
Raileanu, R., Goldstein, M., Szlam, A., and Fergus, R.
\newblock Fast adaptation via policy-dynamics value functions.
\newblock \emph{arXiv preprint arXiv:2007.02879}, 2020.

\bibitem[Ramesh \& Chaudhari(2021)Ramesh and Chaudhari]{ramesh2021model}
Ramesh, R. and Chaudhari, P.
\newblock Model zoo: A growing" brain" that learns continually.
\newblock \emph{arXiv preprint arXiv:2106.03027}, 2021.

\bibitem[Schmidhuber(2015)]{learningtothink2015}
Schmidhuber, J.
\newblock On learning to think: Algorithmic information theory for novel combinations of reinforcement learning controllers and recurrent neural world models.
\newblock \emph{Preprint arXiv:1511.09249}, 2015.

\bibitem[Sch{\"u}rholt et~al.(2021)Sch{\"u}rholt, Kostadinov, and Borth]{schurholt2021self}
Sch{\"u}rholt, K., Kostadinov, D., and Borth, D.
\newblock Self-supervised representation learning on neural network weights for model characteristic prediction.
\newblock \emph{Advances in Neural Information Processing Systems}, 34:\penalty0 16481--16493, 2021.

\bibitem[Sch{\"u}rholt et~al.(2022)Sch{\"u}rholt, Taskiran, Knyazev, Gir{\'o}-i Nieto, and Borth]{schurholt2022model}
Sch{\"u}rholt, K., Taskiran, D., Knyazev, B., Gir{\'o}-i Nieto, X., and Borth, D.
\newblock Model zoos: A dataset of diverse populations of neural network models.
\newblock \emph{Advances in Neural Information Processing Systems}, 35:\penalty0 38134--38148, 2022.

\bibitem[Siegelmann \& Sontag(1991)Siegelmann and Sontag]{siegelmann91turing}
Siegelmann, H.~T. and Sontag, E.~D.
\newblock Turing computability with neural nets.
\newblock \emph{Applied Mathematics Letters}, 4\penalty0 (6):\penalty0 77--80, 1991.

\bibitem[Srivastava et~al.(2012)Srivastava, Schmidhuber, and Gomez]{Srivastava:2012:GCN:2330784.2330902}
Srivastava, R.~K., Schmidhuber, J., and Gomez, F.
\newblock Generalized compressed network search.
\newblock In \emph{Proceedings of the fourteenth international conference on Genetic and evolutionary computation conference companion}, GECCO Companion {\textquoteright}12, pp.\  647{\textendash}648, New York, NY, USA, 2012. ACM, ACM.
\newblock ISBN 978-1-4503-1178-6.
\newblock \doi{10.1145/2330784.2330902}.
\newblock URL \url{http://doi.acm.org/10.1145/2330784.2330902}.

\bibitem[Tang et~al.(2022)Tang, Meng, Hao, Chen, Graves, Li, Yu, Mao, Liu, Yang, et~al.]{tang2022inputting}
Tang, H., Meng, Z., Hao, J., Chen, C., Graves, D., Li, D., Yu, C., Mao, H., Liu, W., Yang, Y., et~al.
\newblock What about inputting policy in value function: Policy representation and policy-extended value function approximator.
\newblock In \emph{Proceedings of the AAAI Conference on Artificial Intelligence}, volume~36, pp.\  8441--8449, 2022.

\bibitem[Torabi et~al.(2018)Torabi, Warnell, and Stone]{torabi2018behavioral}
Torabi, F., Warnell, G., and Stone, P.
\newblock Behavioral cloning from observation.
\newblock \emph{arXiv preprint arXiv:1805.01954}, 2018.

\bibitem[Unterthiner et~al.(2020)Unterthiner, Keysers, Gelly, Bousquet, and Tolstikhin]{Unterthiner2020PredictingNN}
Unterthiner, T., Keysers, D., Gelly, S., Bousquet, O., and Tolstikhin, I.~O.
\newblock Predicting neural network accuracy from weights.
\newblock \emph{ArXiv}, abs/2002.11448, 2020.
\newblock URL \url{https://api.semanticscholar.org/CorpusID:211506753}.

\bibitem[Vaswani et~al.(2017)Vaswani, Shazeer, Parmar, Uszkoreit, Jones, Gomez, Kaiser, and Polosukhin]{vaswani2017attention}
Vaswani, A., Shazeer, N., Parmar, N., Uszkoreit, J., Jones, L., Gomez, A.~N., Kaiser, {\L}., and Polosukhin, I.
\newblock Attention is all you need.
\newblock In \emph{Advances in neural information processing systems}, pp.\  5998--6008, 2017.

\bibitem[Weiss et~al.(2018)Weiss, Goldberg, and Yahav]{weiss2018extracting}
Weiss, G., Goldberg, Y., and Yahav, E.
\newblock Extracting automata from recurrent neural networks using queries and counterexamples.
\newblock In \emph{International Conference on Machine Learning}, pp.\  5247--5256. PMLR, 2018.

\bibitem[Xu et~al.(2022)Xu, Wang, Jiang, Fan, and Wang]{xu2022signal}
Xu, D., Wang, P., Jiang, Y., Fan, Z., and Wang, Z.
\newblock Signal processing for implicit neural representations.
\newblock \emph{Advances in Neural Information Processing Systems}, 35:\penalty0 13404--13418, 2022.

\bibitem[Yun et~al.(2019)Yun, Bhojanapalli, Rawat, Reddi, and Kumar]{yun2019transformers}
Yun, C., Bhojanapalli, S., Rawat, A.~S., Reddi, S.~J., and Kumar, S.
\newblock Are transformers universal approximators of sequence-to-sequence functions?
\newblock \emph{arXiv preprint arXiv:1912.10077}, 2019.

\bibitem[Zare et~al.(2023)Zare, Kebria, Khosravi, and Nahavandi]{zare2023survey}
Zare, M., Kebria, P.~M., Khosravi, A., and Nahavandi, S.
\newblock A survey of imitation learning: Algorithms, recent developments, and challenges.
\newblock \emph{arXiv preprint arXiv:2309.02473}, 2023.

\bibitem[Zeng et~al.(2022)Zeng, Attarian, Ichter, Choromanski, Wong, Welker, Tombari, Purohit, Ryoo, Sindhwani, et~al.]{zeng2022socratic}
Zeng, A., Attarian, M., Ichter, B., Choromanski, K., Wong, A., Welker, S., Tombari, F., Purohit, A., Ryoo, M., Sindhwani, V., et~al.
\newblock Socratic models: Composing zero-shot multimodal reasoning with language.
\newblock \emph{arXiv preprint arXiv:2204.00598}, 2022.

\bibitem[Zhou et~al.(2023)Zhou, Yang, Burns, Jiang, Sokota, Kolter, and Finn]{zhou2023permutation}
Zhou, A., Yang, K., Burns, K., Jiang, Y., Sokota, S., Kolter, J.~Z., and Finn, C.
\newblock Permutation equivariant neural functionals.
\newblock \emph{arXiv preprint arXiv:2302.14040}, 2023.

\bibitem[Zhuge et~al.(2023)Zhuge, Liu, Faccio, Ashley, Csord{\'a}s, Gopalakrishnan, Hamdi, Hammoud, Herrmann, Irie, et~al.]{zhuge2023mindstorms}
Zhuge, M., Liu, H., Faccio, F., Ashley, D.~R., Csord{\'a}s, R., Gopalakrishnan, A., Hamdi, A., Hammoud, H. A. A.~K., Herrmann, V., Irie, K., et~al.
\newblock Mindstorms in natural language-based societies of mind.
\newblock \emph{arXiv preprint arXiv:2305.17066}, 2023.

\end{thebibliography}
\bibliographystyle{icml2024}

\newpage
\appendix
\onecolumn

\section{Theoretical results}
\label{app:proofs}

\begin{figure}[t]
    \centering
    \includegraphics[width=0.3\linewidth]{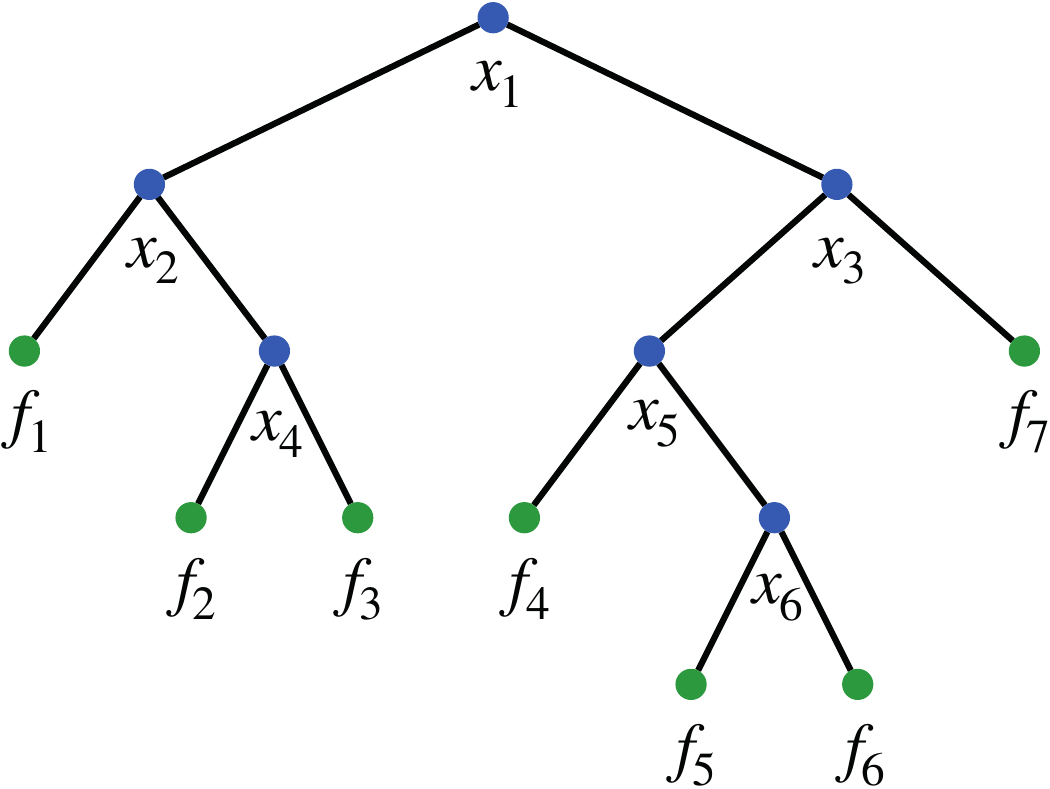}
    \caption{A binary tree constructed as described in the proof of Theorem~\ref{th:interaction_limit}. 
    Giving the inputs $x_j$ corresponding to all branching nodes to a function allows to uniquely identify it.}
    \label{fig:binary_tree}
\end{figure}

\begin{lemma}
\label{lemma:split}
Given any subset $G \subseteq D, |G| \geq 2$, there exists an input $x$ that can be computed for which $f_a(x) \neq f_b(x)$ with $f_a, f_b \in G$.
\end{lemma}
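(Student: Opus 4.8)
The plan is to derive the statement directly from the two structural assumptions placed on $D$: that its members are pairwise functionally distinct, and that each one is \emph{total} computable. The first assumption guarantees the \emph{existence} of a distinguishing input, while the second guarantees that such an input can actually be \emph{found} by an effective procedure, which is what the phrase ``can be computed'' demands. So I would separate the argument cleanly into an existence part and a computability part.

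First I would use $|G| \geq 2$ to fix any two distinct members $f_a, f_b \in G$. Since $G \subseteq D$ and, by assumption, no two functions in $D$ are functionally equivalent, $f_a$ and $f_b$ differ as functions; by the very definition of functional inequality there is therefore at least one $x \in \mathbb{N}$ with $f_a(x) \neq f_b(x)$. This settles the existence part with no computability considerations whatsoever.

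The remaining, slightly more delicate step is to show that one such $x$ can be produced by a \emph{halting} computation. Here I would invoke totality: because the interrogator has access to the Turing numbers of the functions in $D$, it can run the naive search that, for $x = 0, 1, 2, \dots$, computes $f_a(x)$ and $f_b(x)$ and compares them. Each individual evaluation halts precisely because $f_a$ and $f_b$ are total, and the loop itself terminates at the first disagreement, whose existence was established in the previous step. The output of this procedure is the desired computable input $x$.

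I expect the main obstacle to be conceptual rather than technical: one must be careful that the search is \emph{guaranteed} to terminate. This is exactly where totality is essential—without it, evaluating $f_a(x)$ for some $x$ might diverge and stall the search before a distinguishing input is reached, so a dovetailing argument would be needed and the guarantee of a clean halting procedure would be lost. Since the framework restricts $D$ to total computable functions, this difficulty does not arise. The same argument moreover shows that a single computable input distinguishing \emph{some} pair in $G$ suffices to split $G$ into at least two nonempty classes by output value, which is the property needed to drive the inductive bound in Proposition~\ref{th:interaction_limit}.
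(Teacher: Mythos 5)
Your proposal is correct and takes essentially the same route as the paper, which simply asserts that the lemma ``follows immediately from the fact that all functions in $G$ are total computable and functionally distinct''; you have merely unpacked that one-liner into its existence part (pairwise functional distinctness) and its effectiveness part (totality guarantees the naive search for the first disagreement halts). No gap.
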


\begin{proof}
This follows immediately from the fact that all functions in $G$ are total computable and functionally distinct.
\end{proof}

\ab*
\begin{proof}
According to Lemma~\ref{lemma:split}, it is possible to split any set $G \subseteq D, |G| \geq 2$ into two nonempty, non-overlapping subsets: $G_a := \{f \in G | f(x_j) = f_a(x_j)\}$ and $G_b := \{f \in G | f(x_j) \neq f_a(x_j)\}$ for some $f_a \in G$ and $x_j \in \mathbb{N}$. 
Any resulting subset that has at least two members can be split again using the same procedure with a different probing input $x_{j+1}$.
Starting from the full set $D$, it is possible to construct a binary tree (see Figure~\ref{fig:binary_tree}) where the leaves are subsets of $D$ containing exactly one uniquely identified function.
The branching (i.e., non-leaf) nodes correspond to the splitting operation, which involves observing the output of a specific probing input $x_j$.

The Interrogator can identify a given function $f_C \in D$ by providing it with all inputs $x_j$ corresponding to the branching nodes in the binary tree and observing the outputs.
Since any binary tree with $n$ leaves has exactly $n-1$ branching nodes, any function $f_C \in D$ can be identified using $|D| - 1$ interactions.
\end{proof}

Of course, there are `easy' function sets in the sense that their members
can be identified using much fewer interactions.
Consider, for example, the set $\{n \mapsto i\ \forall n | 1 \leq i \leq L\}$.
Here, only one (any) probing input is necessary, since the identity of the function can be directly read from the output.

\cd*
\begin{proof}
It is easy to construct function sets $D$ for which the members cannot be identified in less than $|D|-1$ interactions, even by an interactive Interrogator.

One such function set is $\{\xi_i | 1 \leq i \leq L \}$ with 
$\xi_i: n \mapsto 
    \begin{cases}
    0 \text{ if } n=i,\\
    n \text{ else}
    \end{cases}.$
In the worst case, there is no way around trying all inputs $1, \dots, L-1$.
\end{proof}

\ef*

\begin{proof}

We construct a concrete set of functions that an interactive Interrogator can identify exponentially faster than a non-interactive one.
Consider the family of context-sensitive languages
\begin{equation}
\label{eq:bach_language}
L_{m_2, m_3, \dots, m_k} := \{a_1^{n} a_2^{n+m_2} a_3^{n+m_3} \dots a_k^{n+m_k} | n \in \mathbb{N}\},
\end{equation}
 with $m_2, \dots, m_k \in \mathbb{N}$ and $a_1, \dots, a_k$ being the letters or tokens of the language.
The parameters $m_i$ define the relative number of times different tokens may appear.
As an example, one member of the language $L_{2, 1}$ is the string $a_1 a_2 a_2 a_2 a_3 a_3$.

Let $G_L := \{L_{m_2, \dots, m_k} | m_2, \dots, m_k \in \{1, \dots, M\}\}$, i.e., a set of such languages with different parameters $m_i$.
$G_L$ contains $M ^ {(k-1)}$ languages.
To each language $L_{m_2, \dots, m_k}$, we can assign a unique generative function $g_{m_2, \dots, m_k}$.
This function, given a partial string from the language, returns a list of the allowed tokens for the next step.
If the input string is not a prefix of string from the language, it returns the empty string $\epsilon$.
For example, $g_{2, 1}(a_1) = (a_1, a_2)$, $g_{2,1}(a_1 a_2 a_2) = (a_2)$, and $g_{2,1}(a_1 a_2 a_2 a_3) = \epsilon$.
Our function set $D_L$ is a set of such generative functions, $D_L := \{g_{m_2, \dots, m_k} | m_2, \dots, m_k \in \{1, \dots, M\}\}$.

For an interactive Interrogator, there is a simple strategy to identify a given function $g_C \in D_L$ using $M \cdot (k-1)$ interactions:
The first input is the string $a_1 a_2$.
From there on, the Interrogator acts as an autoregressive generative model---it appends the allowed token returned by $g_C$ to the string and uses it as the new input.
Only one valid token will be returned by $g_C$ for all probing input strings that are generated using this approach since the $n$ is determined to be $1$ from the first input string.
This is repeated until $\epsilon$ is returned, which is after a maximum of $M \cdot (k-1)$ calls to $g_C$.
The last probing input string will be of the form $a_1 a_2^{r_2} \dots a_k^{r_k}$, from which the language can easily be inferred to be $L_{r_2-1, \dots, r_k-1}$.

The non-interactive Interrogator cannot use this strategy, since every probing input except the first depends on $g_C$'s output for the previous probing input. 
We can show that in the non-interactive setting, exponentially many calls to $g_C$ are needed to identify it.
Assuming $n=1$, there are $M ^ {k-2}$ unique prefixes for the first token $a_k$. 
Each of these prefixes is only allowed in $M$ languages $L_{m_2, \dots, m_{k-1}, \cdot}$, namely the ones with fixed $m_1, \dots, m_{k-1}$.
Remember that $g_C$ returns $\epsilon$ whenever it is given a substring that is not part of its language. 
That means, to determine $m_k$, $M ^ {k-2} (M-1)$ different inputs have to be given to $f_C$. 
Only then it is guaranteed that the only informative string about the unknown value of $m_k$, namely $a_1 a_2^{m_2} \dots a_k^{m_k}$, is among the probing inputs.
It follows that to determine all values $m_2 \dots, m_k$ and identify the exact language of $g_C$, a total of $\sum_{b=2}^{k-2} M^b (M-1) = M^{k-1} - M^2$ inputs are needed.

In short, to identify a function from the set $D_L$ described above, an interactive Interrogator needs $O(Mk)$ probing inputs, whereas a non-interactive one needs $O(M^k)$.
\end{proof}

\section{DWSNet Details}
\label{app:dwsnet}

\begin{figure}[h]
\centering
\includegraphics[width=0.7\linewidth]{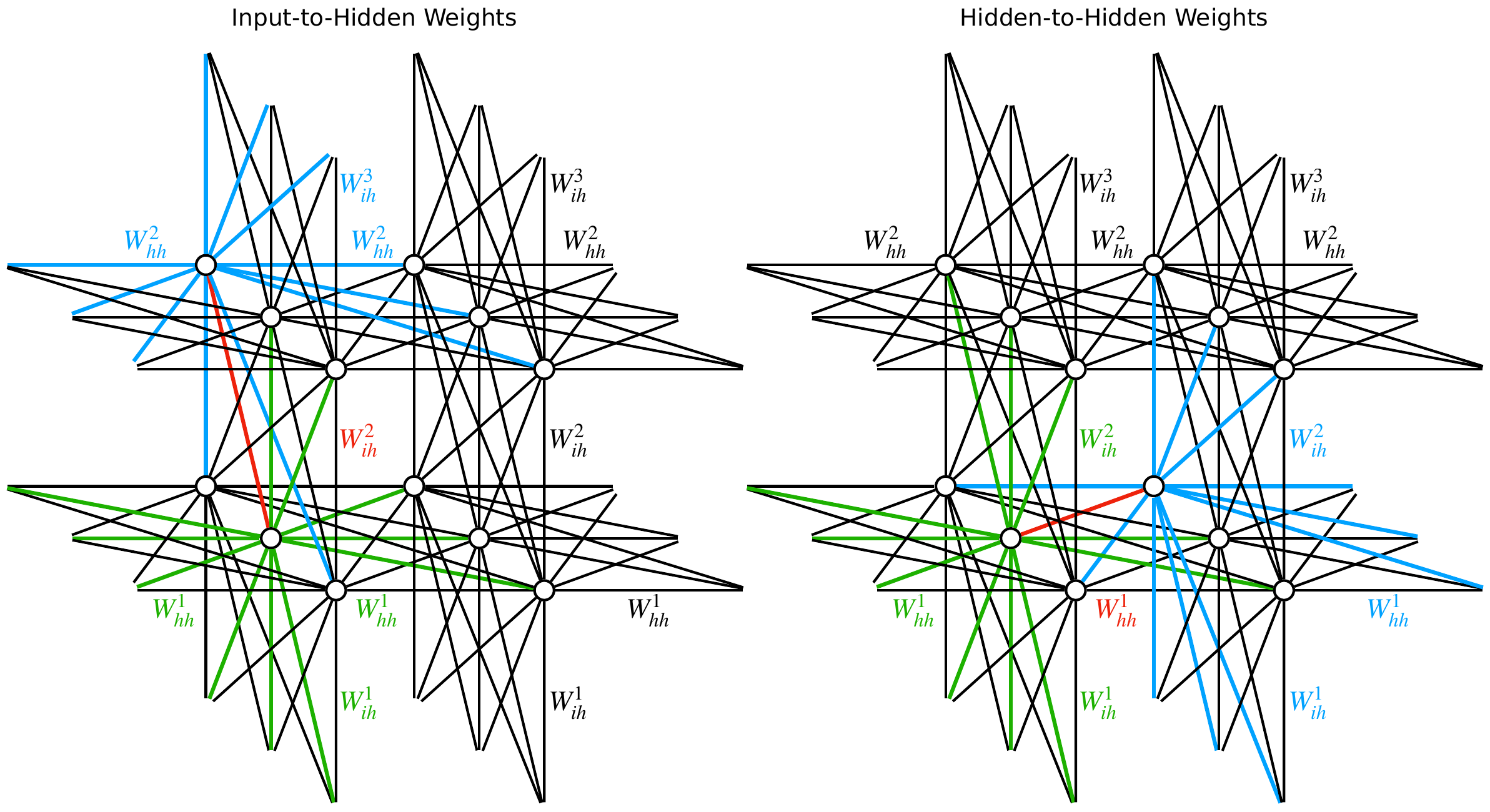}
\caption{All connections from and to a single weight of an RNN in an input-to-hidden weight matrix (left) or hidden-to-hidden weight matrix (right). They must all be taken into account to ensure the equivariance of DWSNets for RNN weights.}
\label{fig:dwsnet_equivariance}
\end{figure}

The implementation of our permutation equivariant weight space layer for RNNs is based on the work of \citet{zhou2023permutation}.
Unlike feedforward networks, RNNs feature two distinct types of weight matrices: those mapping inputs to hidden states ($W^l_{ii}$ for layer $l$) and those mapping hidden states to other hidden states ($W^l_{hh}$).
To ensure the equivariance of weight space feature maps, it is crucial to consider all incoming and outgoing connections for both types of matrices (as illustrated in Figure~\ref{fig:dwsnet_equivariance}).
We define an equivariant linear function $H_i$ for the input to hidden weights as:

\begin{align*}
    H_i(W_{ih})^l_{jk} = & \bigg( \sum_s a_{ih}^{l, s} (W_{ih}^l)_{*, *} + b_{ih}^{l, s} (W_{hh}^l)_{*, *} \bigg) \\
    &+ c_{ih}^{l,l} (W_{ih}^l)_{*, k} + c_{ih}^{l,l-1} (W_{ih}^{l-1})_{k, *} + d_{ih}^{l,l} (W_{ih}^l)_{j, *} + d_{ih}^{l,l+1} (W_{ih}^{l+1})_{*, j} \\
    &+ e_{ih}^{l, l} (W_{hh})^l_{*, k} + e_{ih}^{l, l-1} (W_{hh})^{l-1}_{k, *} + f_{ih}^{l,l} (W_{hh}^l)_{j, *} + f_{ih}^{l,l-1} (W_{hh}^{l-1})_{*, j} \\
    &+ g_{ih}^l (W_{ih}^l)_{jk}.
\end{align*}

The learnable parameters are $a_{ih}, b_{ih}, c_{ih}, d_{ih}, e_{ih}, f_{ih}$ and $g_{ih}$. 
For hidden to hidden weights, the equivariant linear function $H_h$ is:

\begin{align*}
    H_h(W_{hh})^l_{jk} = & \bigg( \sum_s a_{hh}^{l, s} (W_{ih}^l)_{*, *} + b_{hh}^{l, s} (W_{hh}^l)_{*, *} \bigg) \\
    &+ c_{hh}^{l,l} (W_{hh}^l)_{*, k} + c_{hh}^{l,l-1} (W_{hh}^{l-1})_{k, *} + d_{hh}^{l,l} (W_{hh}^l)_{j, *} + d_{hh}^{l,l+1} (W_{ih}^{l+1})_{*, j} \\
    &+ e_{hh}^{l, l} (W_{ih})^l_{*, k} + e_{hh}^{l, l+1} (W_{ih})^{l+1}_{k, *} + f_{hh}^{l,l} (W_{hh}^l)_{j, *} + f_{hh}^{l,l+1} (W_{hh}^{l+1})_{*, j} \\
    &+ g_{hh}^l (W_{hh}^l)_{jk},
\end{align*}

with learnable parameters $a_{hh}, b_{hh}, c_{hh}, d_{hh}, e_{hh}, f_{hh}$ and $g_{hh}$.
In LSTMs, the weight matrices for the input, output, forget, and cell gates are treated as four channels of the weight space input feature map.

Given that an RNN is characterized by the mapping from input and hidden states to output and updated hidden states by a single RNN cell, the universality proofs by \citet{navon2023equivariant} are applicable.
Consequently, under similar regularity conditions, our adapted DWSNets can approximate any function within the weight space.

\section{Dataset Details}

The process for creating the Formal Languages and Sequential MNIST RNN weight datasets is consistent across all instances. In each of the 1000 training iterations, a new LSTM network is initialized, and a task is chosen at random. This model undergoes training for a total of 20000 steps using the AdamW optimizer \cite{loshchilov2017decoupled}, with a weight decay of $10^{-4}$ and a batch size of 32.
We implement a piece-wise linear learning rate schedule that adjusts the learning rate between $[0.01, 0.003, 0.0003]$ at steps $[0, 10000, 20000]$.
The model weights are saved as a datapoint at the training steps $[0, 100, 200, 500, 1000, 2000, 5000, 10000, 20000]$, along with 100 rollout sequences and various attributes (such as the task, training step, and performance metrics, including Sequential MNIST validation and training loss).

\section{Experiment Details}
\label{app:experiment_details}

All experiments, both pre-training and downstream property prediction tasks, are run for 100k training steps with an early stopping criterion based on validation loss performance. 
All MLPs use ReLU activation functions within their hidden layers.
The hyperparameters for the self-supervised pre-training phase are detailed in Table~\ref{tab:pretraining_hyperparams}, the additional settings for the prediction phase are listed in Table~\ref{tab:downstream_hyperparams}.
Tables~\ref{tab:layerwise_hyperparams}-\ref{tab:interactive_hyperparams} report the hyperparameters for six distinct RNN weight encoder architectures, in addition to specifying the total encoder model sizes for both the Formal Languages and Sequential MNIST studies. The probing encoders use a probing sequence length of 22 for Formal Languages and 51 for Sequential MNIST.

\begin{table}[h]
    \begin{minipage}[t]{0.45\linewidth} 
    \centering
    \caption{Pre-Training Hyperparameters}
    \label{tab:pretraining_hyperparams}
    \begin{tabular}{ll}
    \toprule
    \textbf{Hyperparameter} & \textbf{Value} \\ 
    \midrule
    $A$ Hidden Size & 256 \\
    $A$ \#Layers & 2 \\
    $z$ Size & 16 \\
    Batch Size & 64 \\
    Optimizer & AdamW \\
    Learning Rate & 0.0001 \\
    Weight Decay & 0.01 \\
    Gradient Clipping & 0.1\\
    \bottomrule
    \end{tabular}
    \end{minipage}
    \hfill
    \begin{minipage}[t]{0.45\linewidth}
    \centering
    \caption{Additional Downstream Hyperparameters}
    \label{tab:downstream_hyperparams}
    \begin{tabular}{ll}
    \toprule
    \textbf{Hyperparameter} & \textbf{Value} \\ 
    \midrule
    Predictor MLP Size & 128 \\
    Predictor MLP \#Layers & 2 \\
    \bottomrule
    \end{tabular}
    \end{minipage}
\end{table}

\begin{table}[h]
    \begin{minipage}[t]{0.45\linewidth} 
    \centering
    \caption{Layer-Wise Statistics Hyperparameters}
    \label{tab:layerwise_hyperparams}
    \begin{tabular}{ll}
    \toprule
    \textbf{Hyperparameter} & \textbf{Value} \\ 
    \midrule
    MLP Hidden Size & 768 \\
    MLP \#Layers & 3 \\
    \textit{\#Parameters Formal Languages} & \textit{1248016} \\
    \textit{\#Parameters Sequential MNIST} & \textit{1248016} \\
    \bottomrule
    \end{tabular}
    \end{minipage}
    \hfill
    \begin{minipage}[t]{0.45\linewidth} 
    \centering
    \caption{Flattened Weights Hyperparameters}
    \label{tab:flattened_hyperparams}
    \begin{tabular}{ll}
    \toprule
    \textbf{Hyperparameter} & \textbf{Value} \\ 
    \midrule
    MLP Hidden Size & 128 \\
    MLP \#Layers & 3 \\
    \textit{\#Parameters Formal Languages} & \textit{1797264} \\
    \textit{\#Parameters Sequential MNIST} & \textit{1978000} \\
    \bottomrule
    \end{tabular}
    \end{minipage}
\end{table}

\begin{table}[h]
    \begin{minipage}[t]{0.45\linewidth} 
    \centering
    \caption{Parameter Transformer Hyperparameters}
    \label{tab:transformer_hyperparams}
    \begin{tabular}{ll}
    \toprule
    \textbf{Hyperparameter} & \textbf{Value} \\ 
    \midrule
    Size & 128 \\
    Transformer MLP Hidden Size & 512 \\
    \#Layers & 6 \\
    \#Heads & 2 \\
    \textit{\#Parameters Formal Languages} & \textit{1276688} \\
    \textit{\#Parameters Sequential MNIST} & \textit{1278480} \\
    \bottomrule
    \end{tabular}
    \end{minipage}
    \hfill
    \begin{minipage}[t]{0.45\linewidth} 
    \centering
    \caption{DWSNet Hyperparameters}
    \label{tab:dwsnet_hyperparams}
    \begin{tabular}{ll}
    \toprule
    \textbf{Hyperparameter} & \textbf{Value} \\ 
    \midrule
    \# Channels & 48 \\
    \# Layers & 4 \\
    \textit{\#Parameters Formal Languages} & \textit{1279736} \\
    \textit{\#Parameters Sequential MNIST} & \textit{1282400} \\
    \bottomrule
    \end{tabular}
    \end{minipage}
\end{table}

\begin{table}[h]
    \begin{minipage}[t]{0.45\linewidth} 
    \centering
    \caption{Non-Interactive Probing Hyperparameters}
    \label{tab:noninteractive_hyperparams}
    \begin{tabular}{ll}
    \toprule
    \textbf{Hyperparameter} & \textbf{Value} \\ 
    \midrule
    $E_R$ Hidden Size & 256 \\
    $E_R$ \#Layers & 2 \\
    $E_I$ \& $E_O$ Hidden Size & 128 \\
    $E_I$ \& $E_O$ \#Layers & 1 \\
    \textit{\#Parameters Formal Languages} & \textit{1241718} \\
    \textit{\#Parameters Sequential MNIST} & \textit{1254016} \\
    \bottomrule
    \end{tabular}
    \end{minipage}
    \hfill
    \begin{minipage}[t]{0.45\linewidth} 
    \centering
    \caption{Interactive Probing Hyperparameters}
    \label{tab:interactive_hyperparams}
    \begin{tabular}{ll}
    \toprule
    \textbf{Hyperparameter} & \textbf{Value} \\ 
    \midrule
    $E_R$ Hidden Size & 256 \\
    $E_R$ \#Layers & 2 \\
    $E_I$ \& $E_O$ Hidden Size & 128 \\
    $E_I$ \& $E_O$ \#Layers & 1 \\
    \textit{\#Parameters Formal Languages} & \textit{1235830} \\
    \textit{\#Parameters Sequential MNIST} & \textit{1240960} \\
    \bottomrule
    \end{tabular}
    \end{minipage}
\end{table}

\FloatBarrier
\newpage

\section{Additional Results}
\label{app:additional_results}

Figure~\ref{fig:bach_original_vs_cloned} shows the emulation performance of different emulators $A_\xi$ on the Formal Languages datasets. 
We observe overfitting on the training data of the non-interactive probing and parameter transformer encoders. 
Note, however, that this occurs despite the use of early stopping, meaning the models evaluated are those with the lowest validation loss.

Figure~\ref{fig:mnist_original_vs_cloned} shows the emulation performance on the Sequential MNIST datasets.
There we can observe an interesting effect: in the Parameter Transformer setup, $A_\xi$ always classifies the digits with high accuracy, even when conditioned on low-performing RNNs.
We suggest the following explanation: from Figures~\ref{fig:downstream_bar_chart_mnist}, \ref{fig:embedding_space_viz_mnist_task_return} and \ref{fig:embedding_space_viz_mnist_step_generalization} it is clear that for Sequential MNIST, the parameter transformer fails to learn informative representations.
What does the emulator $A_\xi$ learn in this case? 
For Sequential MNIST, the target distributions $y$ typically assign a high probability to the correct digit. 
Therefore, the emulator is incentivized to learn a rotation-invariant MNIST classifier that will perform well on any input, regardless of the conditioning. 
However, there is an additional nuance: the reverse KL divergence loss function prefers high entropy output distributions from the emulator. 
This means that while the emulator's performance in terms of classification loss would be poor (since high entropy distributions generally result in high losses), its performance in terms of classification accuracy can still be good even with a high entropy distribution, as long as the correct digits have a probability that is at least slightly higher than the other ones.

\begin{figure}[H]
    \centering
    \begin{minipage}[b]{0.32\linewidth}
        \centering
        \includegraphics[width=1.\linewidth]{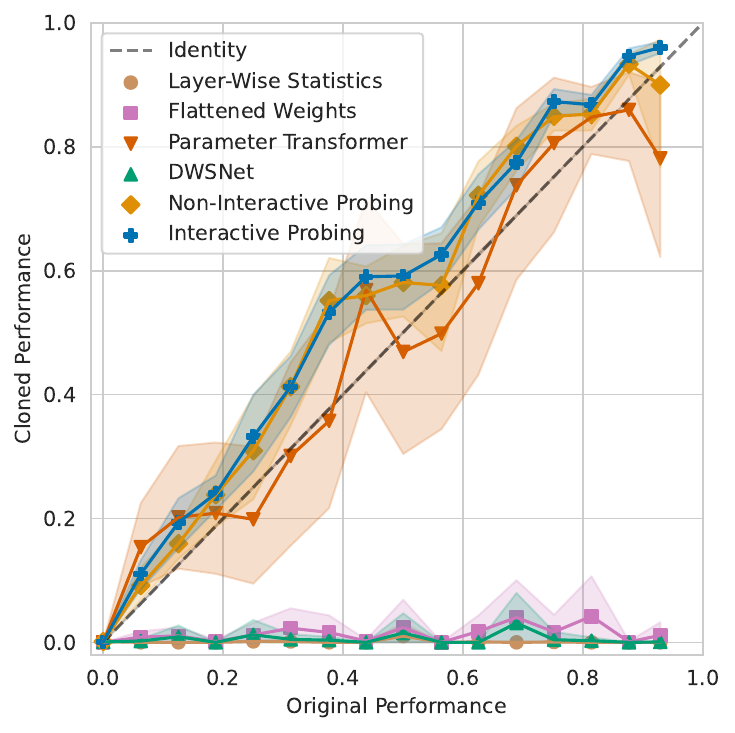}
    \end{minipage}\hfill
    \begin{minipage}[b]{0.32\linewidth}
        \centering
        \includegraphics[width=1.\linewidth]{figures/bach_eval_cloned_returns.pdf}
    \end{minipage}
    \begin{minipage}[b]{0.32\linewidth}
        \centering
        \includegraphics[width=1.\linewidth]{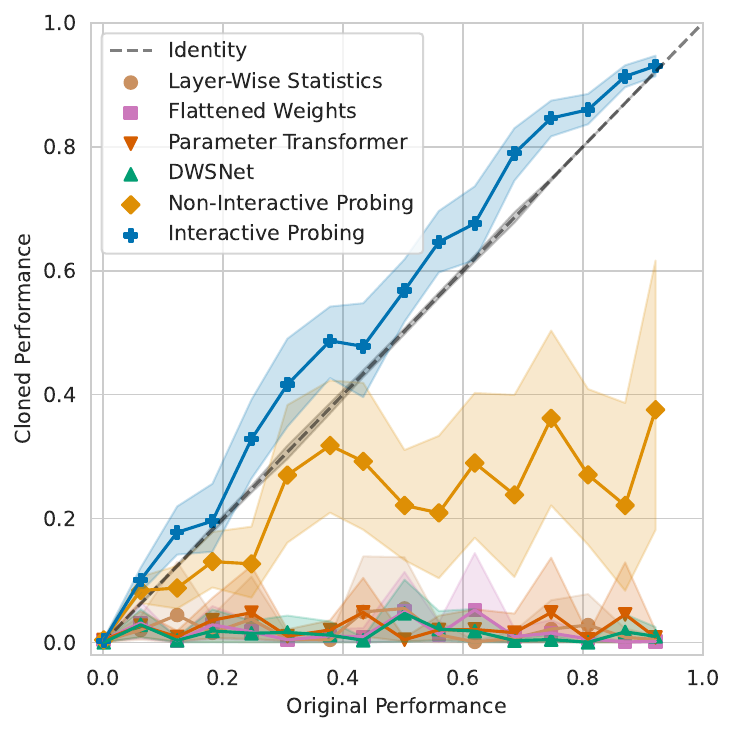}
    \end{minipage}
    \caption{Formal Languages original performance vs. emulated performance. Training (left), validation (middle), and OOD test data (right).}
    \label{fig:bach_original_vs_cloned}
\end{figure}

\begin{figure}[H]
    \centering
    \begin{minipage}[b]{0.32\linewidth}
        \centering
        \includegraphics[width=1.\linewidth]{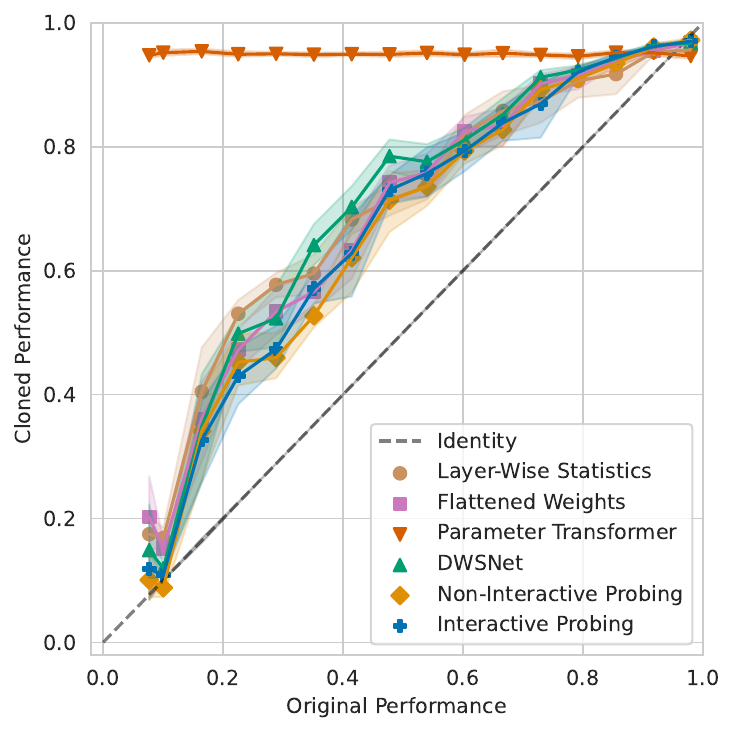}
    \end{minipage}\hfill
    \begin{minipage}[b]{0.32\linewidth}
        \centering
        \includegraphics[width=1.\linewidth]{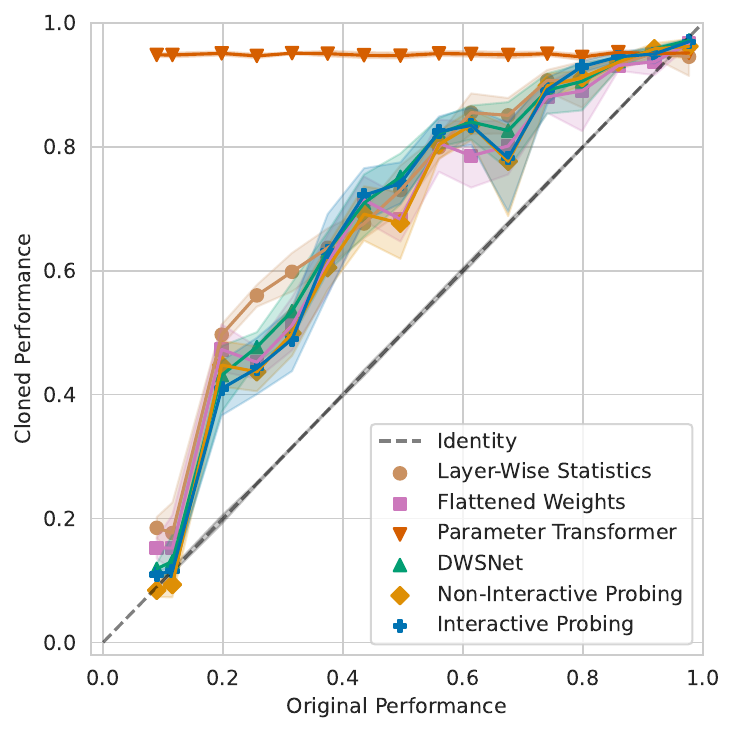}
    \end{minipage}
    \begin{minipage}[b]{0.32\linewidth}
        \centering
        \includegraphics[width=1.\linewidth]{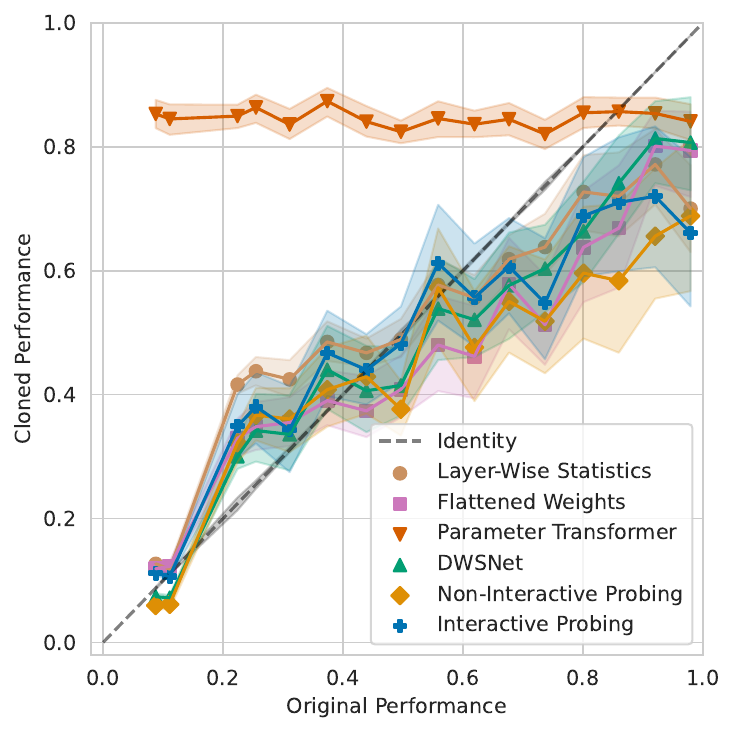}
    \end{minipage}
    \caption{Sequential MNIST original performance vs. emulated performance. Training (left), validation (middle), and OOD test data (right)}
    \label{fig:mnist_original_vs_cloned}
\end{figure}

\subsection{Probing Sequence Number and Length}

The probing encoders presented in this work can utilize a fixed number of parallel probing sequences.
If not specified otherwise, 8 probing sequences are used.
Figures~\ref{fig:bach_loss_vs_probing_seq} and \ref{fig:mnist_loss_vs_probing_seq} demonstrate that using more probing sequences correlates with lower loss. Interactive probing proves highly effective for RNNs applied to the Formal Languages dataset, though it faces challenges with training stability.
In principle, one probing sequence is sufficient to solve the task, as evidenced by the top-performing model among 15 seeds achieving very low loss regardless of the number of probing sequences. Nonetheless, training stability improves with an increase in probing sequences.

For RNNs analyzing the Sequential MNIST dataset, the optimal length for probing sequences is 51 (49 plus BOS and EOS), matching the number of tiles representing a full digit. In the case of Formal Languages, a probing sequence length of 22 is sufficient for interactive probing to identify all languages of the dataset.
Figure~\ref{fig:bach_loss_vs_probing_seq_len} illustrates that shorter probing sequences yield poorer results, while longer ones do not increase performance and can even harm training stability. Therefore, the length of the probing sequence should be as long as necessary but no longer.

Figure~\ref{fig:bach_probing_sequences} displays the probing sequences generated by different Interactive Probing encoders for a specific RNN that produces strings from languages $L_{-3,-1,1}$ (the encoders for this figure learn only one probing sequence). A length of $7$ is notably insufficient, failing to learn any meaningful probing sequence. However, at sequence lengths of $12$, $22$, and $42$, the encoder successfully learns to probe actual strings of varying lengths from $L_{-3,-1,1}$.
Note that while there is a string of length less than $12$ for this particular language, this does not apply to all languages in the dataset.

\begin{figure*}[h]
    \centering
    \begin{minipage}[t]{0.32\linewidth}
        \centering
        \includegraphics[width=1\linewidth]{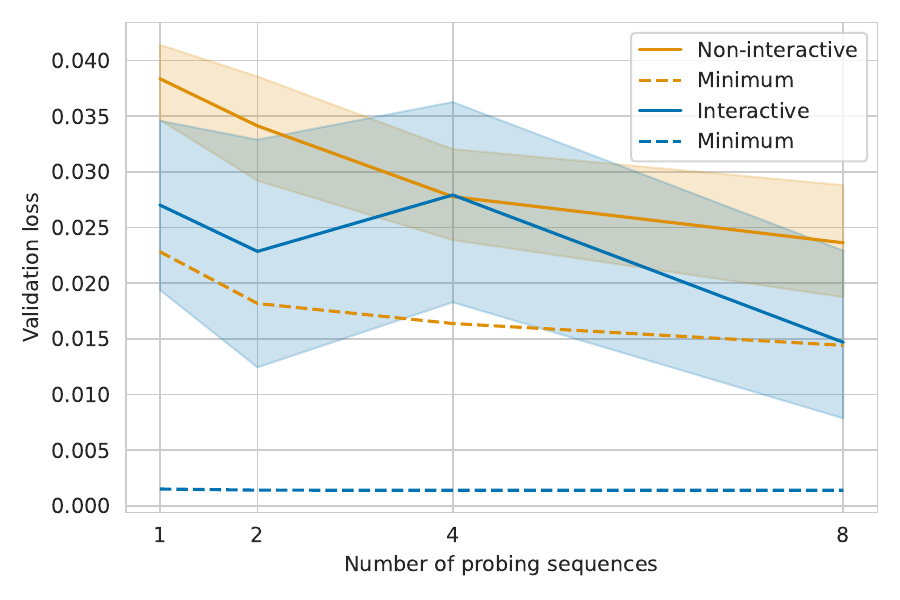}
        \caption{Formal Languages validation loss vs. number of probing sequences}
        \label{fig:bach_loss_vs_probing_seq}
    \end{minipage}\hfill
    \begin{minipage}[t]{0.32\linewidth}
        \centering
        \includegraphics[width=1\linewidth]{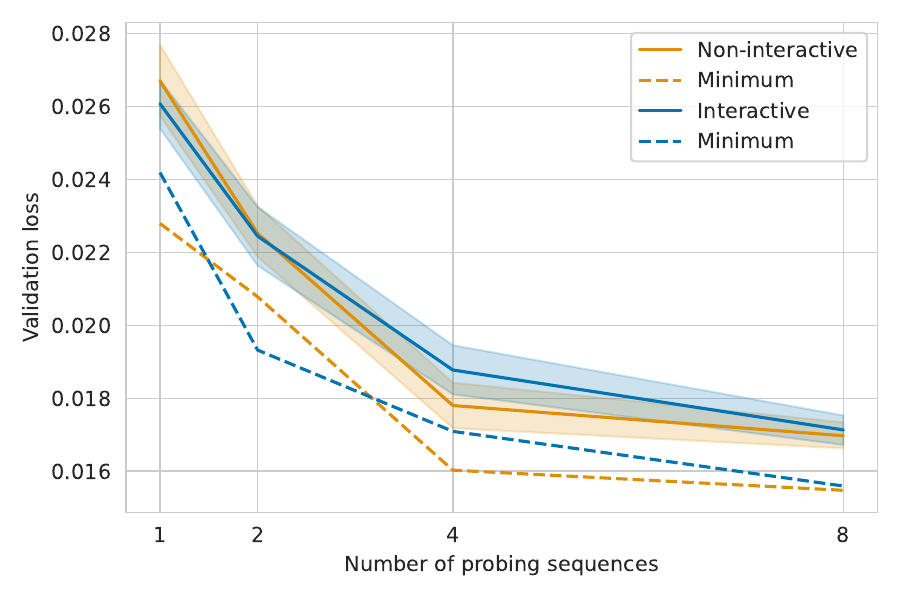}
        \caption{Sequential MNIST validation loss vs. number of probing sequences}
        \label{fig:mnist_loss_vs_probing_seq}
    \end{minipage}\hfill
    \begin{minipage}[t]{0.32\linewidth}
        \centering
        \includegraphics[width=1\linewidth]{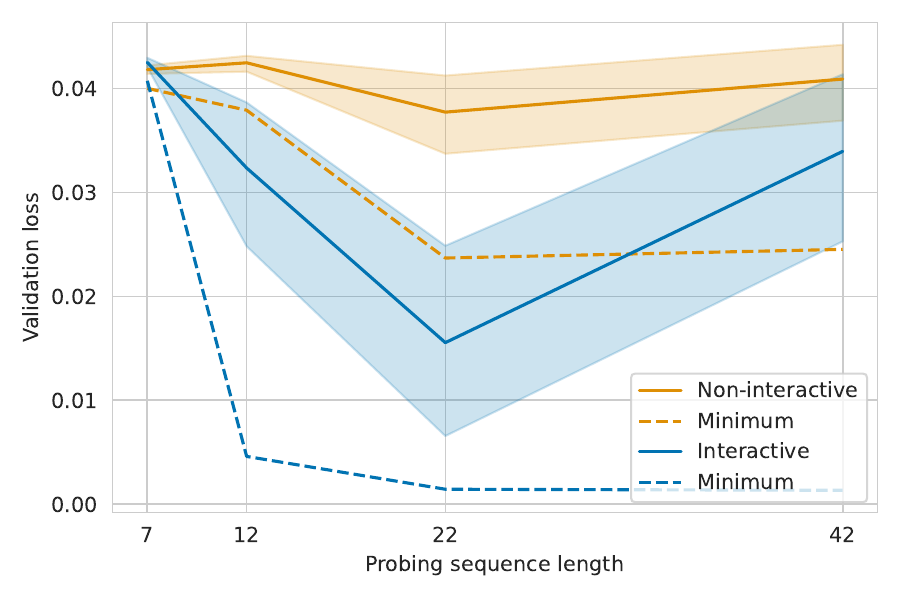}
        \caption{Formal Languages validation loss vs. length of probing sequences}
        \label{fig:bach_loss_vs_probing_seq_len}
    \end{minipage}
\end{figure*}


\begin{figure}[h]
\centering
\includegraphics[width=0.7\linewidth]{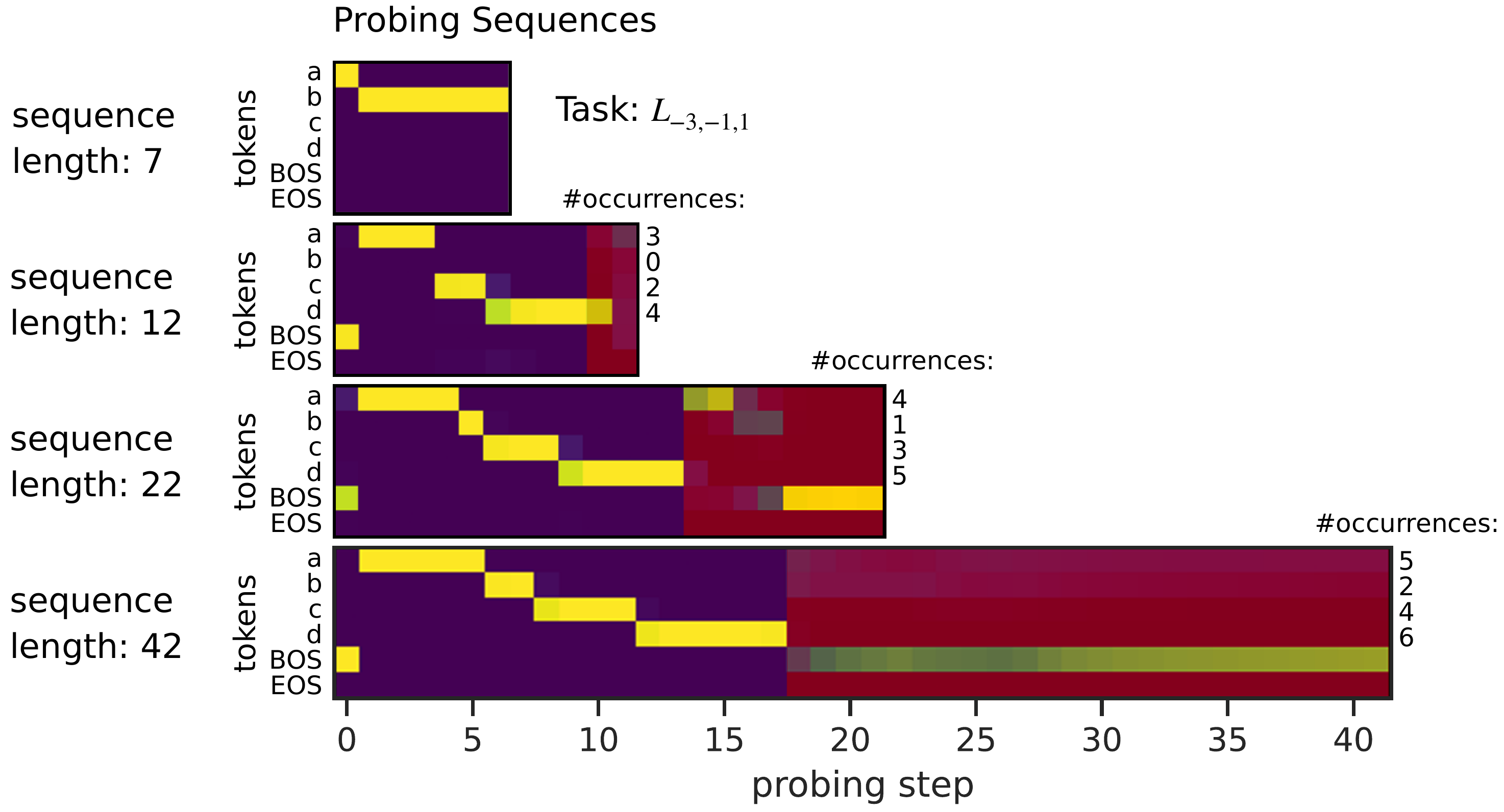}
\caption{Learned interactive probing sequences for a formal language}
\label{fig:bach_probing_sequences}
\end{figure}

\subsection{Hidden Neuron Permutation Invariance}
\label{app:permutation_invariance}

\begin{figure*}[h]
    \centering
    \begin{minipage}[t]{0.48\linewidth}
        \centering
        \includegraphics[width=1\linewidth]{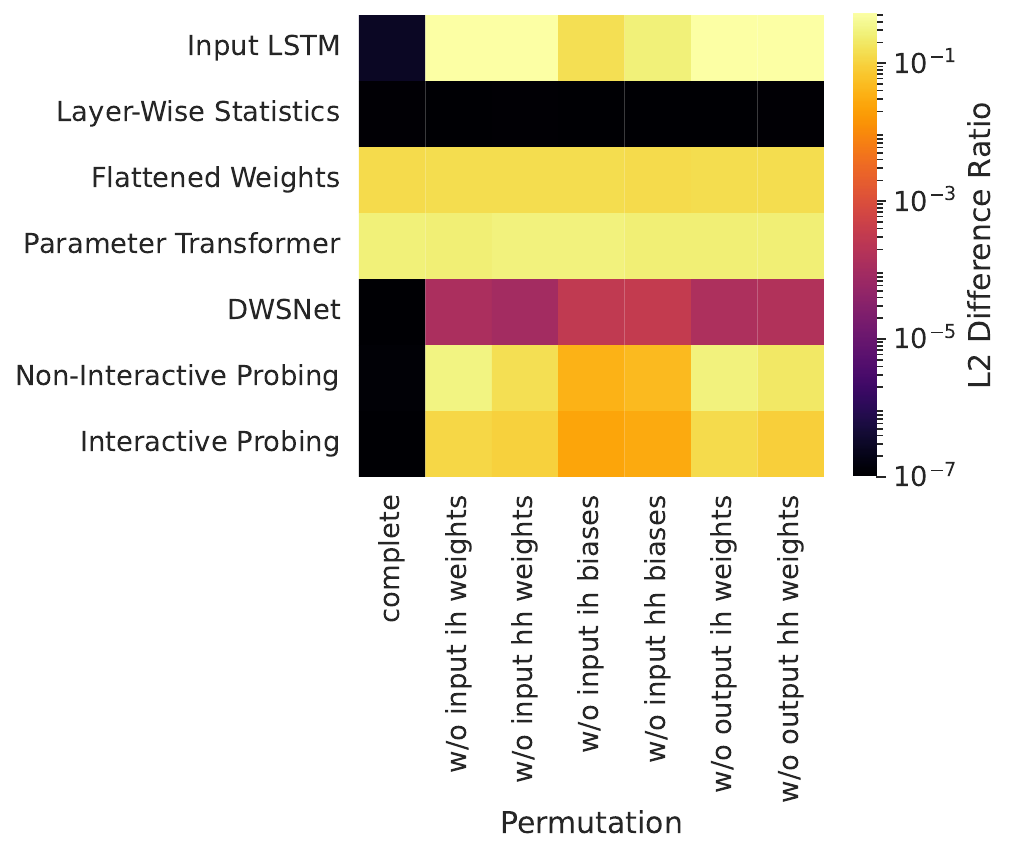}
        \caption{Various encoder architectures, with respect to either complete or incomplete permutation of hidden neurons, are analyzed. A black cell indicates that the output remains unchanged, whereas a bright cell signifies that the output changes to a similar extent as it would have with completely different weights. These results pertain to untrained encoders.}
        \label{fig:invariances_untrained}
    \end{minipage}\hfill
    \begin{minipage}[t]{0.48\linewidth}
        \centering
        \includegraphics[width=1\linewidth]{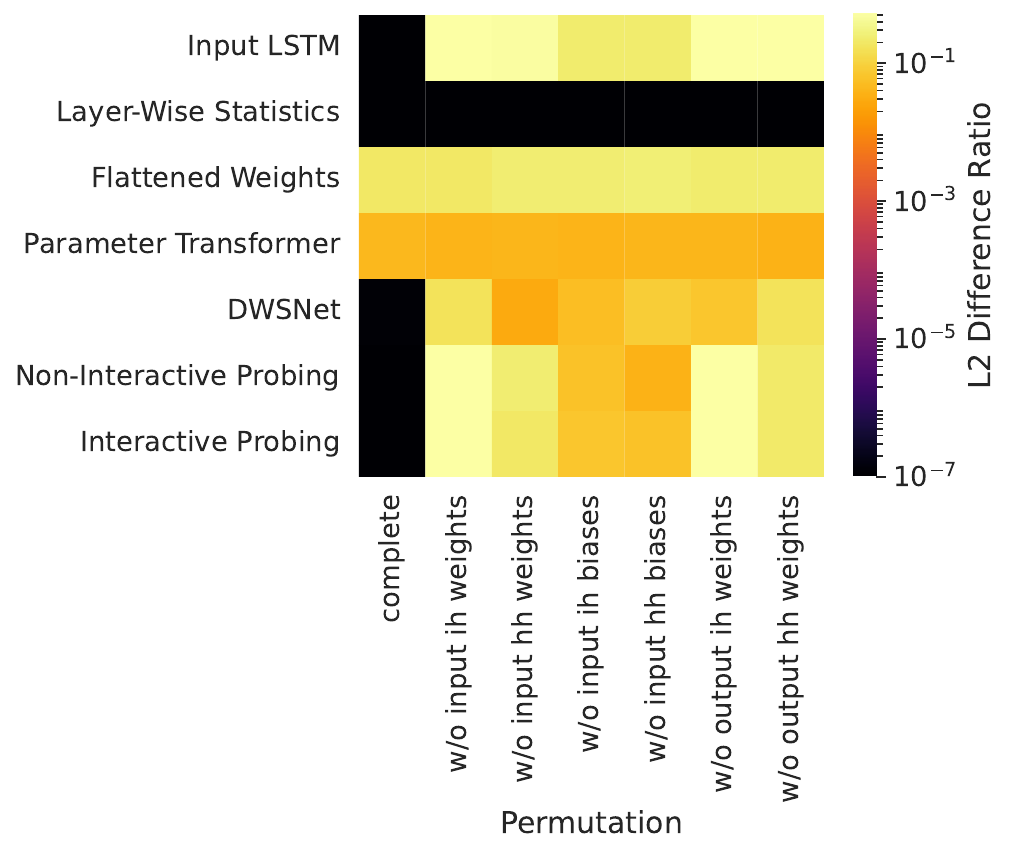}
        \caption{Invariance properties of the input LSTM $f_\theta$ and of fully trained encoders. Although the general invariance properties of the architectures remain unchanged, we observe that DWSNet becomes more sensitive to incorrect or incomplete permutations.}
        \label{fig:invariances_pretrained}
    \end{minipage}
\end{figure*}

Figures~\ref{fig:invariances_untrained} and \ref{fig:invariances_pretrained} illustrate the invariance of different encoder architectures towards correct (first column) and incorrect (other columns) hidden neuron permutations. Ideally, invariance should be present only in the first column. However, the Layerwise Statistics encoder shows invariance to both correct and incorrect permutations. In contrast, the Flattened Weights and Parameter Transformer encoders exhibit no invariance, neither before (Figure~\ref{fig:invariances_untrained}) nor after training (Figure~\ref{fig:invariances_pretrained}). 
During training, DWSNet becomes less invariant to wrong and incomplete permutations.
The methodology of the plot is explained below.

An LSTM consists of four gates: input gate, output gate, forget gate, and cell gate. These gates process the input data (from the previous layer) and the incoming hidden state. For a correct hidden neuron permutation in an LSTM, which preserves accuracy, all of the following elements must undergo the same permutation:
\begin{itemize}
    \item the rows of the four input-to-hidden weight matrices
    \item the rows of the four hidden-to-hidden weight matrices
    \item the four input-to-hidden bias vectors 
    \item the four hidden-to-hidden bias vectors
    \item the columns of the four input-to-hidden weight matrices of the next layer 
    \item the columns of the four hidden-to-hidden weight matrices
\end{itemize}

In Figure~\ref{fig:invariances_pretrained}, the top row labeled `Input LSTM' shows $\frac{||f_{\hat{\theta}}(x) - f_{\theta}(x)||_2}{|| f_\psi(x) - f_{\theta}(x) ||_2}$, where $\theta \in \Theta$ represents the original weights, $\tilde{\theta}$ is the (correct or incorrect, depending on the column in the figure) permutation of $\theta$ and $\psi \in \Theta$ is an entirely different set of weights.
The other rows show $\frac{||E(\hat{\theta}) - E(\theta)||_2}{|| E(\psi) - E(\theta)||_2}$ for different encoders $E$.
A bright cell indicates that the permutation significantly changes the result. A black cell means that the result for $\hat{\theta}$ is unchanged compared to the original result for $\theta$.
The encoders used in  Figure~\ref{fig:invariances_pretrained} have been trained on the Formal Languages dataset, which also provides $\theta$ and $\psi$.
Layerwise statistics features are invariant to all neuron permutations. The Flattened Weights Encoder and the Parameter Transformer show no signs of permutation invariance, even after training. DWSNet is, due to its construction, invariant only to complete permutations. During training it gets more sensitive to wrong and incomplete permutation. The probing encoders naturally inherit the invariance properties directly from the RNN $f$.

\subsection{Downstream Performance}

Table~\ref{tab:correlations} shows the Pearson correlation coefficients between pre-training validation loss, as defined in Equation~\ref{eq:loss}, and the downstream prediction losses for different properties, across all encoder architectures and random seeds. This analysis offers insight into the alignment between the pre-training objective and downstream applications. There is a high correlation with formal language task prediction and with Sequential MNIST performance prediction as well as in-distribution task prediction. A lower correlation does not necessarily imply that pre-training is less effective; it may simply indicate that moderate emulation performance is sufficient for good downstream prediction results. Figures~\ref{fig:downstream_bar_chart_bach} and \ref{fig:downstream_bar_chart_mnist} provide the complete results of downstream predictions for Formal Languages and Sequential MNIST data.

\begin{table}[h]
\caption{Correlation of pre-training validation loss with downstream prediction performance}    
\label{tab:correlations}
    \begin{center}
    \begin{small}
    \begin{tabular}{lcc}
        
        \toprule
        \textbf{Downstream Prediction} & \textbf{Validation} & \textbf{OOD Test} \\
        \midrule
        
        \textbf{Formal Languages} \\
        Performance & 0.509 & 0.505 \\ 
        Task & 0.985 & 0.987 \\ 
        \textbf{Sequential MNIST} \\
        Performance & 0.883 & 0.872 \\ 
        Task & 0.864 & 0.521 \\ 
        Generalization Gap & 0.776 & 0.642 \\ 
        Training Step & 0.762 & 0.486 \\
    \bottomrule
    \end{tabular}
    \end{small}
    \end{center}
\end{table}

\begin{figure}[h]
\centering
\includegraphics[width=0.8\linewidth]{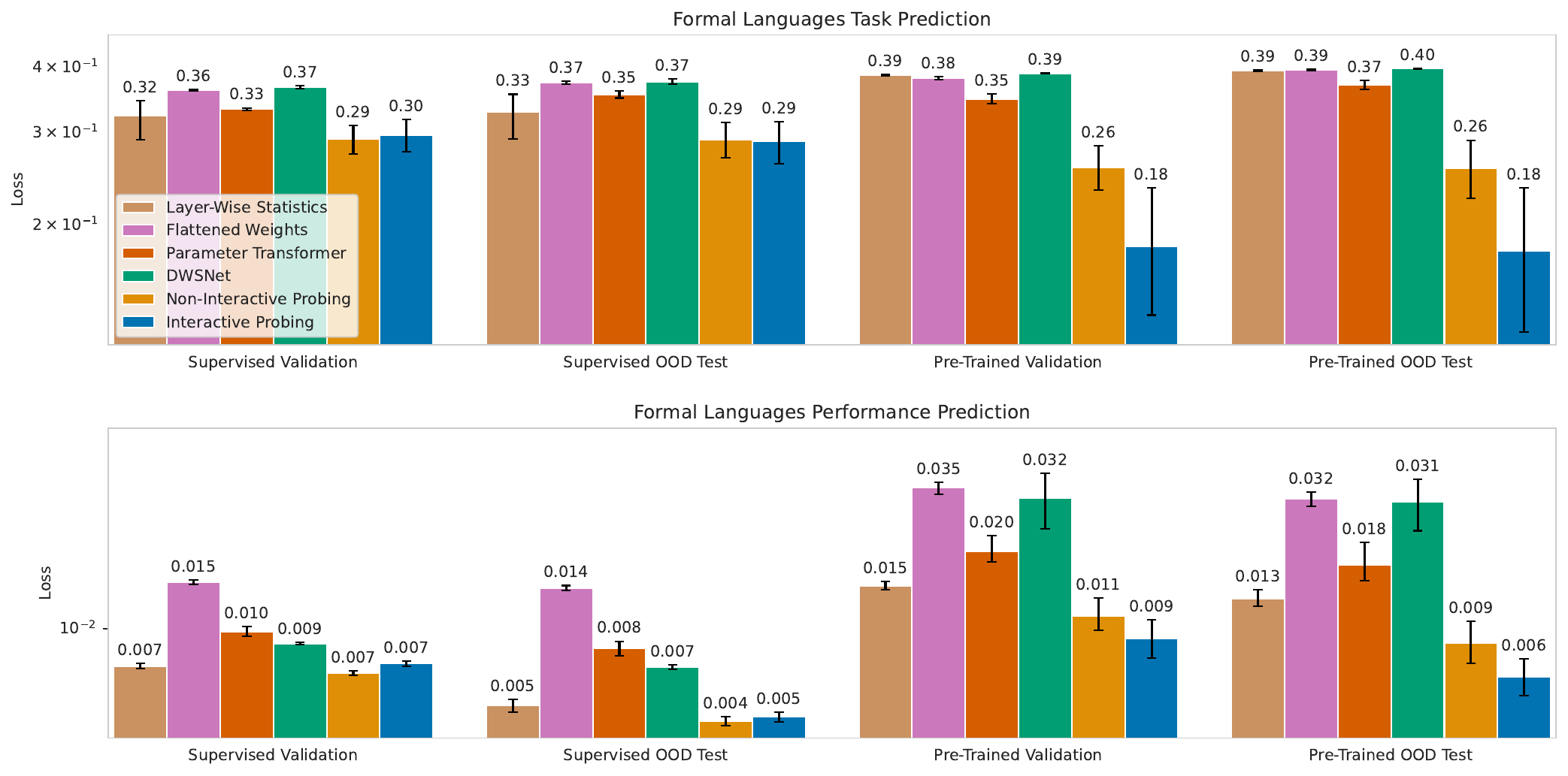}
\caption{Formal Languages downstream performance on task and performance prediction.}
\label{fig:downstream_bar_chart_bach}
\end{figure}

\begin{figure}[h]
\centering
\includegraphics[width=0.8\linewidth]{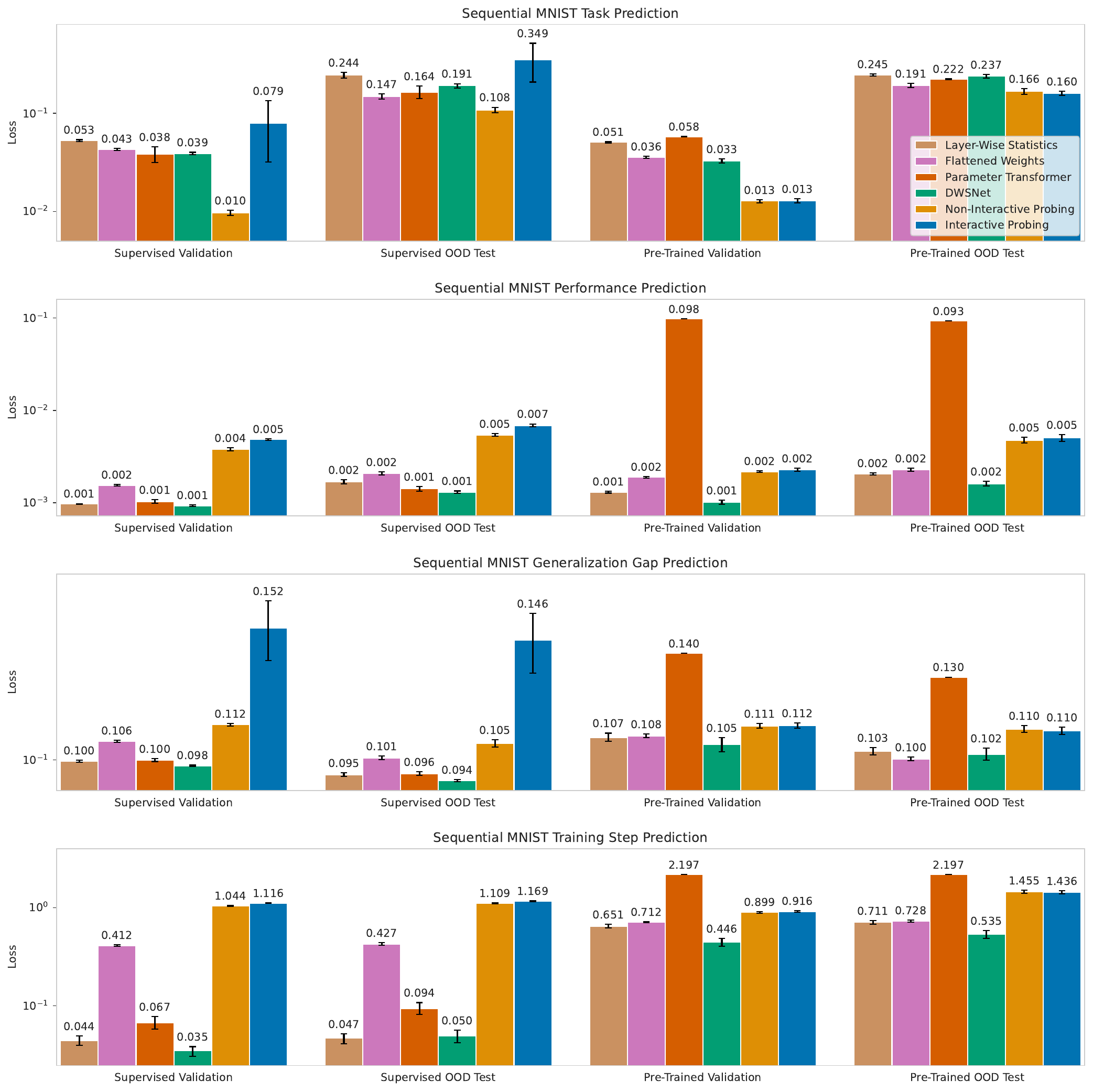}
\caption{Sequential MNIST downstream performance on task prediction, performance prediction, generalization gap, and training step prediction.}
\label{fig:downstream_bar_chart_mnist}
\end{figure}

\FloatBarrier
\clearpage

\subsection{Learned Embedding Spaces}
\label{app:embedding_viz}

\begin{figure}[!ht]
\centering
\includegraphics[width=0.95\linewidth]{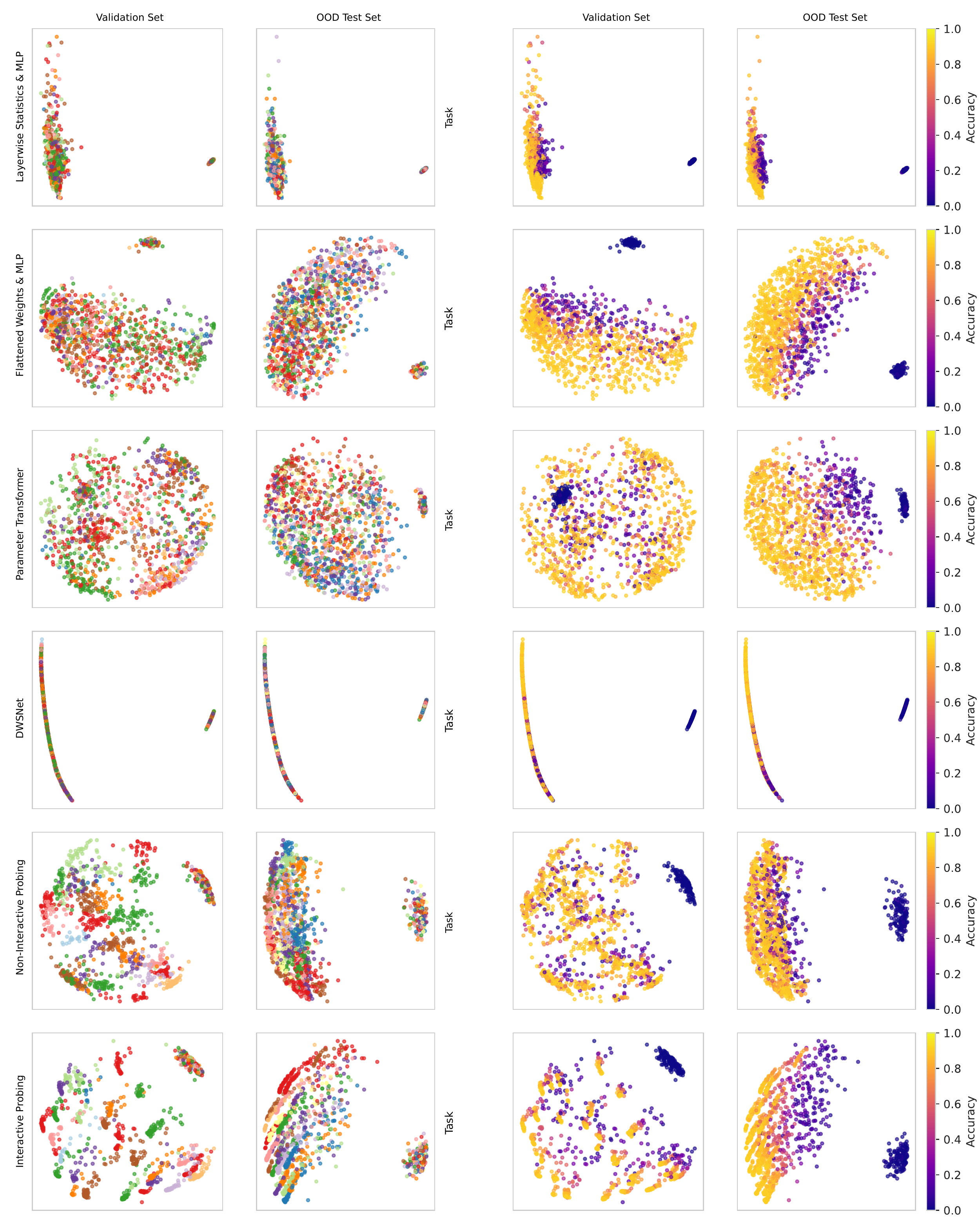}
\caption{Visualization of the embedding spaces for Formal Languages.}
\label{fig:embedding_space_viz_bach}
\end{figure}

\begin{figure}[h]
\centering
\includegraphics[width=0.95\linewidth]{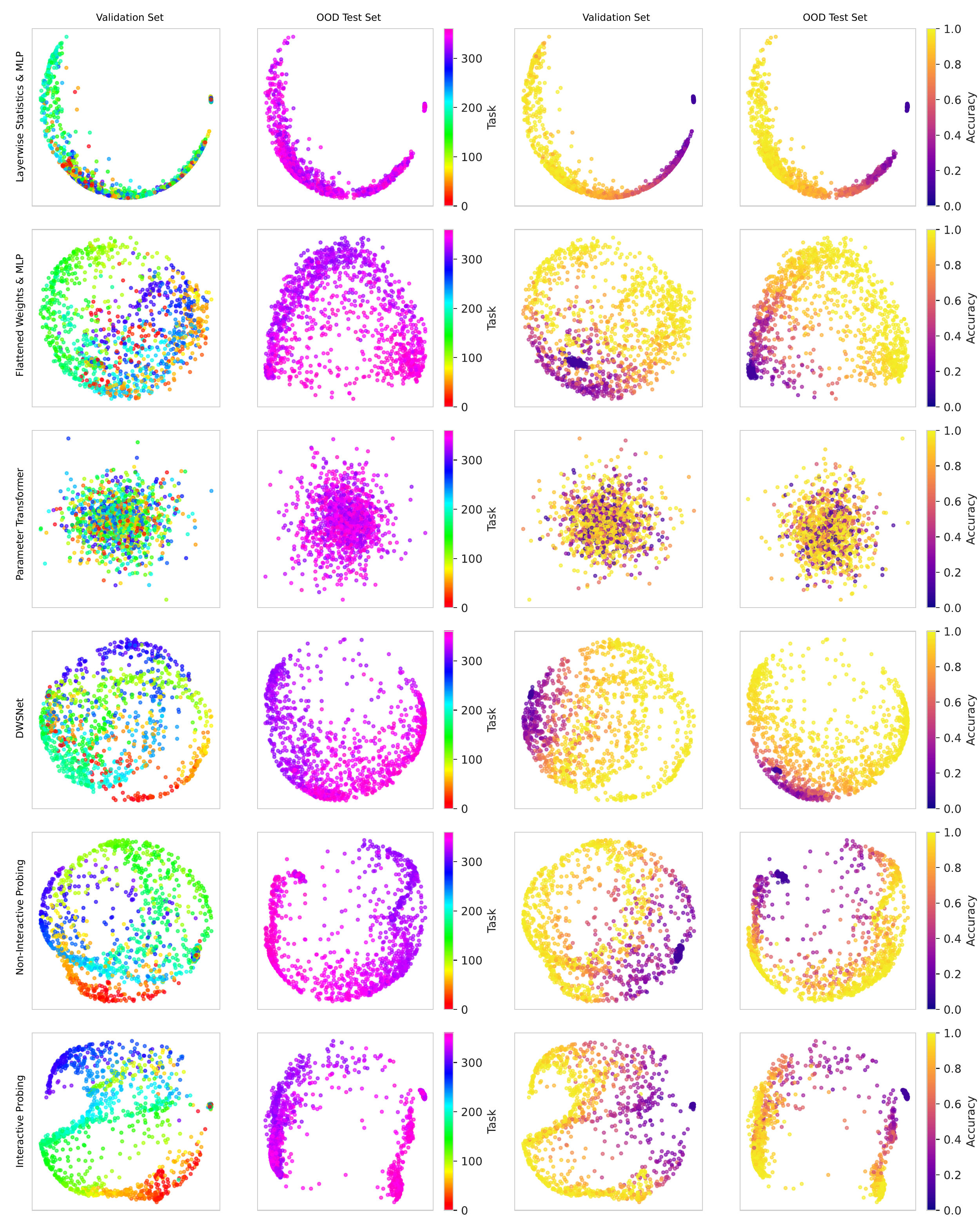}
\caption{Visualization of the embedding spaces for Sequential MNIST (colored by task and return).}
\label{fig:embedding_space_viz_mnist_task_return}
\end{figure}

\begin{figure}[h]
\centering
\includegraphics[width=0.95\linewidth]{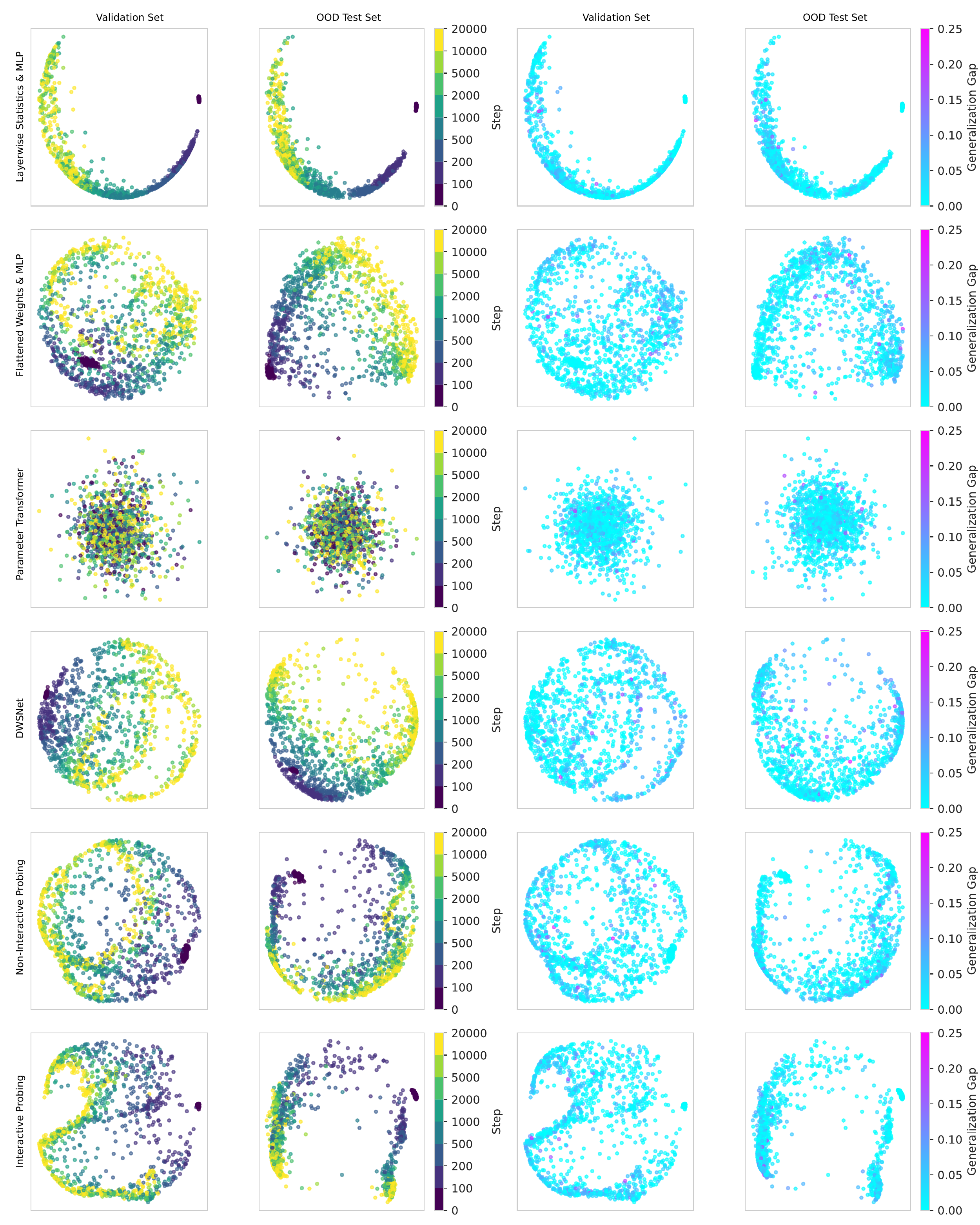}
\caption{Visualization of the embedding spaces for Sequential MNIST (colored by training step and generalization gap).}
\label{fig:embedding_space_viz_mnist_step_generalization}
\end{figure}



\end{document}